\newtheorem{theorem}{Theorem}
\newtheorem{corollary}{Corollary}[theorem]
\newtheorem{definition}[theorem]{Definition}
\newtheorem{proposition}[theorem]{Proposition}
\let\oldsection\section
\renewcommand\section{\clearpage\oldsection}
\begin{document}

\begin{titlepage}
   \begin{center}
       \vspace*{0cm}
       \large{\text{B.Comp. Dissertation}}
 
       \vspace{0.5cm}
        \textbf{Fair Multi-party Machine Learning - a game theoretic approach}
 
       \vspace{1.5cm}
 
       \textbf{By} \\
       \vspace{0.5cm}
        Chen Zhiliang
       
        \vspace{4cm}
      Department of Computer Science\\
       School of Computing\\
       National University of Singapore\\
       2019/2020
 
       \vspace{0.8cm}
   \end{center}

\end{titlepage}

\begin{titlepage}
   \begin{center}
       \vspace*{0cm}
 
       \large{\text{B.Comp. Dissertation}}
 
       \vspace{0.5cm}
        \textbf{Fair Multi-party Machine Learning - a game theoretic approach}
 
       \vspace{1.5cm}
 
        \textbf{By} \\
       \vspace{0.5cm}
        Chen Zhiliang
       
        \vspace{3cm}
      Department of Computer Science\\
       School of Computing\\
       National University of Singapore\\
       2019/2020
 
       \vspace{0.8cm}
   \end{center}
   \vspace{2cm}
   Project no.:	H148310\\
   Advisor: Prof Bryan Kian Hsiang Low\\
    \\
   Deliverables:\\
   \hspace{2mm}Report - 1 volume
\end{titlepage}
\begin{center}
\large{Abstract}
\end{center}

\hspace{4mm} High performance machine learning models have become highly dependent on the availability of large quantity and quality of training data. To achieve this, various central agencies such as the government have suggested for different data providers to pool their data together to learn a unified predictive model, which performs better. However, these providers are usually profit-driven and would only agree to participate in the data sharing process if the process is deemed both profitable and fair for themselves. Due to the lack of existing literature, it is unclear whether a fair and stable outcome is possible in such data sharing processes. Hence, we wish to investigate the outcomes surrounding these scenarios and study if data providers would even agree to collaborate in the first place. Tapping on cooperative game concepts in Game Theory, we introduce the data sharing process between a group of agents as a new class of cooperative games with modified definition of stability and fairness. Using these new definitions, we then theoretically study the optimal and suboptimal outcomes of such data sharing processes and their sensitivity to perturbation. Through experiments, we present intuitive insights regarding theoretical results analysed in this paper and discuss various ways in which data can be valued reasonably.
\\ \\ \\ \\
\textbf{Subject Descriptors:} \\
\hspace{4mm} Game Theory \\
\hspace{4mm} Mechanism Design \\
\hspace{4mm} Machine Learning
\\ \\ \\ \\
Keywords: Multi-party Machine Learning, Cooperative Games, Game Theory, Shapley Value, Characteristic Function

\begin{center}
\large{Acknowledgement}
\end{center}
I would like to thank my advisor, Prof. Bryan Low, for guiding me throughout this project. He showed me numerous traits a good researcher should have and never hesitated to point me in the right direction whenever I appeared unsure of myself.

I would also like to thank my parents, who have always supported me in my endeavours. Lastly, I thank Ong Min for offering me many words of encouragement and being my emotional pillar of support throughout the years. 

\tableofcontents
\medskip

\section{Introduction}
In recent years, there has been an increasing amount of work done in the field of collaborative machine learning. As massive amount of data is held by different data providers, it becomes worthwhile for them to work together, possibly combining these datasets to learn a unified predictive model. Each data provider can then ideally make use of this model, which arguably performs better than any model created from small individual dataset. In fact, the importance of pooling data together from multiple sources has been underscored by various public initiatives such as the \textit{Ocean Protocol} framework (created jointly by Pricewaterhousecoopers Singapore and Singapore-based startup DEX).
From the point of view of a central agency, such as the government, data sharing is desirable because citizens, businesses and society stand to benefit from better predictive models in general.

Most work on collaborative machine learning in a multi-agent setting then focuses on parallelizing the process of learning a predictive model from decentralized data sources. However, there has been little to no work focused on evaluating the relative contribution of agents and investigating the outcomes surrounding such collaborative processes. In particular, we wish to investigate if we can measure each agent's relative contribution reasonably and find a sufficiently fair reward to allocate to each player so that they are satisfied with the collaborative process. \begin{figure}[h]
\includegraphics[scale=0.6]{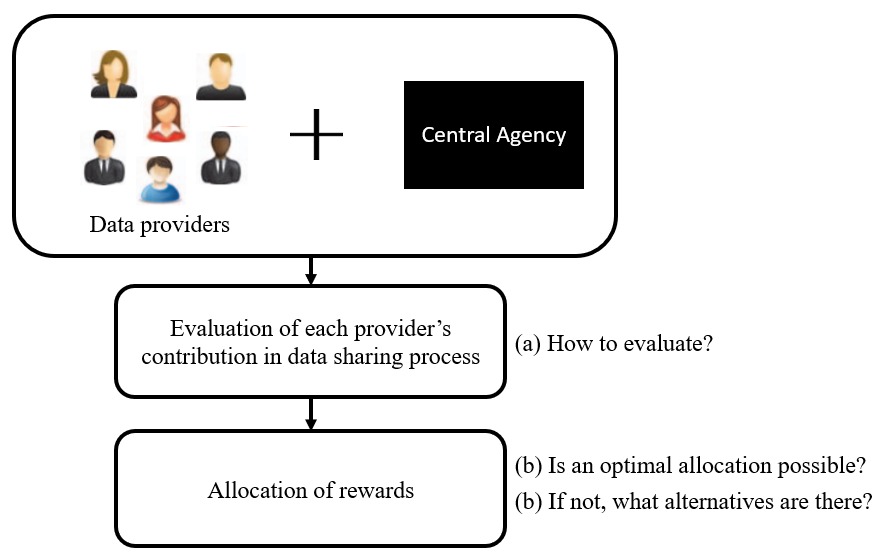}
\centering
\caption{Overview of data sharing process and this paper's focus}
\end{figure}\\
The first challenge arises in terms of evaluating the contribution of each agent in a multi-party machine learning process - when data is jointly pooled from multiple sources to create a unified model, what is the relative contribution of each agent in the collaborative effort? Answering this question is paramount because the reward each agent walks away from collaboration heavily depends on his relative contribution.

The next challenge arises after one has established the relative contribution of each agent in the collaborative process - given this contribution measure, how can we reward the participating agents appropriately after collaboration (the concept of 'reward' here refers to the value of the resulting model that each agent receives after collaborating)? While the reward given to an agent should clearly commensurate his contribution (fair), it should also be attractive enough to ensure agents are incentivised to continue collaborating (stable). We need to investigate if such fair and stable allocation of reward is even possible given the data contributed by each agent, and if not, what possible alternatives are available.
To contextualise the above challenges, we introduce the following motivating example.
\subsection{A motivating example behind fair multi-party machine learning}
Artificial intelligence has shown great promise in healthcare for many applications such as tumour detection and the diagnosis of eye diseases. However, much of its performance is dependent on the availability of sufficient high quality data.
\begin{figure}[h]
\includegraphics[scale=0.5]{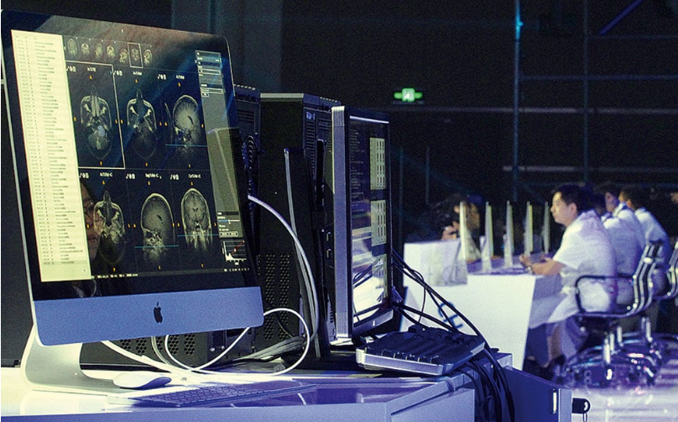}
\centering
\caption{BioMind, developed by AI Research Centre for Neurological Disorders and Capital Medical University, beat human experts in a brain tumour detection competition in June 2018}
\end{figure}
Assume a government healthcare agency wants hospitals to use patients' MRI \footnote{Magnetic resonance imaging} scans to create a model for automatic tumour detection. However, data held by individual hospital may be insufficient to produce a good model. From the point of view of the government, it is desirable for all the hospitals to pool their data to create a unified model - this implies that patients enjoy better predictive results. 
However in reality, hospitals are profit-driven and will only agree to participate if the following two criteria are met: first, a hospital wants the reward gained from the collaborative effort to be higher than any possible reward derived when it chooses to work alone or with a smaller group of hospitals. Second, a hospital expects the reward received to be fair and proportional to its relative contribution at the end of the collaboration. The differing level of reward here refers to the predictive model (of varying performance) that is given back to the hospitals.

The first criterion stems directly from the profit-driven aim of hospitals - a better predictive model implies that a hospital can make more profits and thus clearly it prefers to gain more profits from collaboration than from other actions (one may find this similar to the idea of opportunity cost). On the other hand, the second criteria implies that hospitals holding high quality data, worried that smaller hospitals with relatively inferior data may become freeloader in the collaborative process, expects the reward given to be fair.

As a result, the central agency needs to find a sound way to measure the contribution of each hospital in the collaborative process and decide the value of model that should be given to each hospital after collaborating. In particular, it is important to investigate what defines an optimal outcome, if an optimal outcome is even possible and if not, what alternatives are possible. This motivates our paper.

\subsection{Our contribution and research focus}
This paper taps heavily on concepts related to cooperative games in Game Theory.
Our research focus in this paper lies strongly in the theoretical representation of a multi-party machine learning process as a new class of cooperative game with new definitions of stability and fairness. After which, we analyse various optimal and suboptimal outcomes in such games and show how some of these outcomes can be derived. These outcomes provide insights into how data providers behave in a multi-party machine learning process and allow us to theoretically investigate if the collaboration will be successful and give desirable results.
\\ \\
In Section 3, we provide a brief overview of cooperative games in Game Theory and show that conventional cooperative games are unable to account for some unique subtleties present in data sharing and other similar processes. In Section 4 and 5, we modify conventional assumptions in cooperative games to account for this and introduce a new class of cooperative game with the production of non-rivalrous goods; we also redefine the concept of stability and fairness in such games by tapping on the conventional concept of \textit{Shapley value} in Game Theory. In Section 6, we introduce a computationally efficient way to check if an outcome of a modified game is stable and investigate how an outcome can be optimal. In Section 7, We also analyse various suboptimal outcomes in a data sharing process. Lastly, in Section 8, we perform sensitivity analysis to analyse how sensitive an optimal outcome is towards marginal changes in the data sharing process.

We also perform a series of experiments in Section 9 using two different model valuation measures and demonstrate some results from previous sections.

\section{Related Work}
Literature review reveals that while the study of fairness in the domain of data sharing and collaborative learning has become popular in recent years, most work has focused solely on evaluating the contribution of data providers in a data sharing process or treating the process as non-cooperative. In addition, none of existing works analyses how a fair and stable outcome can be derived after evaluating the data providers' contribution.
\\ \\
The use of \textit{Shapley value} \cite{9}, a classic concept stemming from cooperative game, as an evaluation measure of one's contribution in data sharing processes \cite{2,3} has become increasingly popular in recent years. However, these works focus on alleviating the computational complexities in calculating the \textit{Shapley value} with respect to agents' data; while the usage of \textit{Shapley value} in these works is similar to ours, they do not comment on how one can derive an optimal reward payoff to participating data providers after finding the \textit{shapley value}, which our work does. Moreover, these works also fail to point out that a data sharing process contains subtleties which deviate from conventional cooperative games, which our paper examines in Section 4.
\\ \\
Furthermore, works such as \cite{5,6} attempt to formulate the data sharing process as a non-cooperative game, where each participating player contributes data to create a unified predictive model. However, such non-cooperative game rewards each participating players with the same model at the end of the game, which violates our aim to endow players with a reward commensurate to his contribution. Moreover, non-cooperative games fails to account for the possibility of group cooperation between players, which not only is more realistic in real life, but also examines the idea of fair division of reward at the end of the game.
\\ \\
Some existing literature \cite{2,7} also remarked that participating players could receive the same model and be renumerated with varying amount of money after data sharing. However, it is uncertain how to determine the "exchange rate" between data contribution and money and some surveys \cite{8} mentioned that people feel that data and money are not exchangeable. On the contrary, our work focuses on distributing a subset of the learnt model (with value commensurating the player's contribution to this learnt model) back to participating players directly; this is desirable because as long as we defined the "value" of a learnt model appropriately using some theoretical measures, the players' reward (in the form of a learnt model) can be directly pegged to this "value".

\section{Background on cooperative games}
In Game Theory, cooperative games provide a framework for economists and mathematicians to study how a group of self-interested players chooses to behave when they are allowed to cooperate with each other to generate some resources of value. Despite being self-interested, these players may still choose to cooperate because there may be positive externalities created when larger groups are formed. However, it is not always the case that all players choose to cooperate in one single large group; depending on the resources created and the amount of positive externalities generated, it is entirely possible that we observe multiple smaller coalition of players forming. In fact, this is what we observe in real life: business cooperation usually exists between only a few corporations. However, our paper considers it desirable for all players to cooperate and show that under certain optimality conditions, they will choose to do so.
\\ \\
Much analysis in cooperative games then focuses on what coalitions will form and how the resources generated in such coalitions can be divided amongst the participating players (termed as \textit{outcomes} of the game). In the following sections, we will formally define such cooperative games \cite{10} and investigate how such definitions can be interpreted in the context of a multi-party machine learning process. Subsequently, we identify how conventional cooperative game concepts fail to represent the data sharing process adequately and suggest modifications.
\subsection{Defining a cooperative game with a characteristic function}
Assume that $N$ represents a set of $n$ players. Define $v: 2^n \xrightarrow{} \mathcal{R}$ as a \textit{characteristic function}. A \textit{characteristic function} maps any coalition $C \subseteq N$, or a subset of players, to a real number which represents the value that is generated when this coalition of player chooses to work together.
\\ \\
In the context of multi-party machine learning, we can imagine that when players choose to work together (in the form of data sharing) to create a model from the pooled data, this model represents the value created by a coalition of players; the characteristic function $v(.)$ can be certain statistical or information-centric metric used to indicate the value of such models. Furthermore, one can even use the model's performance over a test data set as an indicator of its value.

In fact, various analyses in this paper make no assumptions of the exact characteristic function used in a data sharing process. However, towards the end of the paper, we justify the use of certain functions over others. As we will see from experiments in section 9, there are many different interpretations of characteristic functions and they lead to different outcomes.

\subsubsection{Outcomes of cooperative games}
An \textit{outcome} of a cooperative game with $n$ players and a given characteristic function $v(.)$ is characterised entirely by the following two parts: \begin{enumerate}
    \item a partition of players into different coalitions, called a \textit{coalition structure}; and
    \item a \textit{payoff vector} $<x_1,...,x_n>$, which distributes the value of each coalition  among its members
    $\sum_{i\in C} x_i \leq v(C)$ for all coalition $C$ formed in the \textit{coalition structure}.
 \end{enumerate}
The first part tells us which coalitions will be formed; the second part tells us that for each coalition formed in the first part, how value created in this coalition is divided amongst the players in it. In particular, we notice that second part implies that the sum of payoff distributed to members of a coalition cannot exceed the value generated by this coalition; for example, when a group of firms work together to generate some profits, the payment given to all the firms in total cannot exceed the profit generated in the first place.

For any given cooperative game, there are many possible outcomes. Fundamental research in cooperative game theory focuses on finding outcomes with certain desirable or logical properties. In the following section, we review some desirable properties in which an outcome of a conventional cooperative game can have, assuming that all players choose to cooperate together. While the definition of these properties will be modified later on in our paper, the intuitive meaning behind these properties is still relevant.

\subsubsection{Stability and Core}
Assuming that the grand coalition $N$ forms and all participating players choose to cooperate, a stable outcome ensures that any player does not have incentive to leave this grand coalition. In the context of multi-party machine learning, this property is desirable because it encourages players to share data in a larger coalition to create a unified model instead of breaking off into smaller groups.
\begin{definition}
Let $<x_1,x_2,...,x_n>$ be a payoff vector to players {1,2,...,n}. Then this payoff vector is \textbf{stable} if and only if $\sum_{i \in C} x_i \geq v(C)$ for all $C \subseteq N$.
\end{definition}
Intuitively, an outcome is stable if the reward received by each player at the end is such that no subset of players can simultaneously break off into smaller sub-coalitions which generate higher rewards than the sum of payment each player earns from the current reward. As the name suggests, one can interpret stability as some kind of equilibrium for the players, where there is no incentive for anyone to take deviating actions.

In conventional cooperative games, the set of all \textbf{stable} solution payoffs is called the $\textbf{Core}$. Notice that deriving the stable set of solution consists of finding the feasible region satisfying $2^n$ constraints (by \textbf{Definition 1}). Hence, the \textbf{Core} does not necessarily exist.
\subsubsection{Shapley value}
On the other hand, when we divide the value created in the grand coalition $N$ to the participating players in a payoff vector, we also wish to capture the notion of fairness in this outcome. While the notion is "fairness" is not captured in any one single definition, it can be represented reasonably by a few different properties.

In cooperative games, the \textit{\textbf{shapley value}} derives a payoff vector $<\phi_1,\phi_2,\dots,\phi_n>$ based on each player's marginal contribution to all possible sub-coalitions in the following equation, given that $v(.)$ is the characteristic function: \begin{equation}
    \phi_{i}=\sum _{S\subseteq N\setminus \{i\}}{\frac {|S|!\;(N-|S|-1)!}{N!}}(v(S\cup \{i\})-v(S))
\end{equation}
This payoff vector satisfies the following properties which capture the notion of "fairness" intuitively:
\begin{enumerate}
    \item \textbf{Symmetry}; if $v(C \cup i)-v(C) = v(C \cup j)-v(C)$ for all $C \subseteq N$, then $\phi_i = \phi_j$. That is, if player $i$ and player $j$ has the same marginal contribution to every possible coalition of players, then $\phi_i=\phi_j$
    \item \textbf{Null player}; if $v(C \cup i) = v(C)$ for all $C \subseteq N$, then $\phi_i=0$. That is, if player $i$ has zero marginal contribution to any subcoalition, then $\phi_i=0$
    \item \textbf{Deservedness} if $v(C \cup i) \leq v(C \cup j)$ for all $C \subseteq N$, then $\phi_i \leq \phi_j$. That is, if player $j$ has a larger marginal contribution than player $i$ for all possible subcoalitions, then $\phi_i \leq \phi_j$.
    \item \textbf{Efficiency}; $\sum^{n}_{i}\phi_i=v(N)$.
\end{enumerate}
The last property is important because it allows one to interpret the \textit{shapley value} as a measure of relative contribution with respect to the value of resources created by the grand coalition. In particular, it is not difficult to see that the \textit{shapley value} gives us one particular \textit{outcome} (as defined in Section 3.1.1) of a cooperative game by allocating the value created by the grand coalitions to a player with the properties above.
\\ \\
Unfortunately, the allocation given by the \textit{Shapley value} may not be \textit{stable} in conventional cooperative games (since a stable solution may not even exist). In later sections, we will observe the recurring theme where fairness and stability may not be achievable together.
\subsubsection{Limitations in context of data sharing}
Even though we are tempted to model the data sharing process as a cooperative game and regard the combined learnt model as the value generated by a group of players, we observe that under conventional game theory, the ideal outcomes (based on conventional concepts of stability and fairness) of such games are less than desirable and do not promote sharing of data. For example, imagine if two agents A and B whose datasets are valued in the following manner:
\begin{equation}
\begin{split}
 & v(A) = 1
\\
 & v(B) = 1
 \\
 & v(A \cup B) = 2
\end{split}
\end{equation}
Based on conventional outcomes studied in cooperative game theory, if the players choose to collaborate, the only stable solution payoff is ${x_A,x_B} = {1,1}$. Furthermore, the shapley value gives us $\phi_A,\phi_B = 1,1$ as well. But clearly, since players are producing predictive models, which can be duplicated for free, we should be rewarding both players with the payoff ${x_A,x_B} = {2,2}$ (notice we did not create resources from the thin air, but merely duplicated them).
\\ \\
We first try to understand why this happens. In example (2), the \textit{shapley value} explains the contribution of each player well: it seems correct that both players have contributed to $v(A \cup B)$ equally. However, we observe that, unlike conventional cooperative games, predictive models created from the data sharing process can be duplicated in part or entirely for negligible cost. As such, we do not have to restrict our total payoff to a coalition of players such that it sums to the value of one data model. In fact, in the extreme case, we can distribute the same entire learnt model to each player, regardless of how much they contributed to it (of course, this payoff is not fair). As such, the \textit{shapley value} should merely give us an idea of what the relative contribution of each player is, and not dictate the final division of reward to each player.

In the following section, we will modify conventional assumptions surrounding cooperative games to suit a multi-party machine learning data-sharing process and analyse what can be ideal outcomes in such modified games. In particular, we observe that the model created by coalitions in a data-sharing process can be categorised as a \textit{non-rivalrous good}, where players can enjoy part of it without diminishing its availability to other players. Furthermore, we use the \textit{Shapley value} merely as a measure of contribution between the players and not as an indicator of the reward that should be allocated to each of them.
\section{Deviating from conventional cooperative games}
\subsection{Overview}

We wish to capture the notion that resources created are \textit{non-rivalrous} in our modified game. In fact, each player can receive as high as the total value created by the group. However, the central agency mediating the collaborative process is able to control the level of reward given to any player. To do this, we need to redefine what a \textit{payoff vector} is.\begin{definition}
In a modified game, a \textbf{payoff vector} $<x_1,x_2,...,x_n>$ to a coalition structure is such that for all coalitions $C$ belonging to the coalition structure formed, $x_i \leq v(C)$ for any player $i \in C$.
\end{definition}
Compared to the definition of \textit{payoff vector} in Section 3.1.1, this new definition restricts the payment to a coalition of players such that each player cannot receive more than the value created in this coalition. Notice this is different from conventional cooperative games, which dictate that the sum of payments received by all the players cannot exceed the value created in a coalition.

The data sharing process can now be represented entirely by this modified game with a new definition of \textit{payoff vector} in an outcome. However, changing this single definition leads to a myriad of effects on the concept of stability, fairness and outcome, which will be covered in the subsequent sections.

\subsection{Alternative definition of stability}
Under this modified game, the original definition of stability (\textbf{Definition 1}) no longer ensures that a payoff vector is stable. For example, for a three-player game with the following characteristic function defining the value of models created in various coalitions: \begin{equation}
\begin{split}
 & v(A) = 1, v(B) = 1, v(C) = 1
\\
 & v(A \cup B) = 2, v(A \cup C) = 2, v(B \cup C) = 2
 \\
 & v(A \cup B \cup C) = 3
\end{split}
\end{equation}
A payoff vector $<x_A,x_B,x_C> = <1.5,1.5,1.5>$ would have been stable according to \textbf{Definition 1} because it is easily verifiable the sum of payoffs to any subset of players exceed the value created by them. However, in a modified game where resources created by a coalition can be duplicated for free, player $A$ and player $B$, for example, can choose to leave the grand coalition and work together to both earn a payoff of higher than 1.5 (Since they can pool their data together in subcoalition $A \cup B$ to potentially create a model of value 2 and earn a payoff $<x_A, x_B> =<2,2>$). Thus, the original definition of stability fails for our modified game.

We redefine, in an intuitive way, the definition of stability in a modified game: \begin{definition}
In a modified game where the grand coalition $N$ of $n$ players forms, a payoff vector $<x_1,x_2,...,x_n>$ is \textbf{stable} if and only if for any subcoalitions $C \subseteq N$, there exists a player $k \in C$ such that $x_k \geq v(C)$
\end{definition}
An intuitive way to understand this definition is that players in the grand coalition are only satisfied with their current reward if there are no subcoalitions where a subset of players can choose to form privately and simultaneously earn a higher reward. As such, considering that the maximum payoff one can earn in any subcoalition is the value created by the subcoalition itself, there must be at least one player in every possible subcoalition $C$ who is contented with his current payoff under the grand coalition (i.e $x_k \geq v(C)$), preventing that subcoalition of players from deviating privately (since the refusal of a single player is sufficient to prevent a group of players from deviating together).
\subsection{Definition of proportionality}
In Section 3, we introduced how the \textit{Shapley value} derives a payoff vector $\phi(N) = <\phi_1, \phi_2, ... , \phi_n>$ $= <x_1,x_2,\dots,x_n>$ (assuming all $n$ players work together in coalition $N$) which dictates how the value generated from the grand coalition should be divided amongst its members to maintain some fair properties. In fact, the \textit{shapley value} indicates the level of contribution of each member with respect to the model created of value $v(N)$. 

However, in a modified game, \textbf{Definition 2} places bounds on the payoff to individual players instead on the sum of payoffs. We can scale the vector $\phi(N)$ by a positive constant $\alpha$ such that the resulting payoff vector $<x_1,\dots,x_n> = \alpha\phi(N)$ preserves the ratio of contribution between each player. Here, we introduce the concept of proportionality to categorise any payoff vector satisfying the following property:
\begin{definition}
A payoff vector $<x_1,x_2,\dots,x_n>$ is \textbf{proportional} for a given contribution measure vector $\mathcal{C}\in\mathcal{R}^n$ if and only if  there exists a positive constant $\alpha$ such that $<x_1,x_2,\dots,x_n> = \alpha  \mathcal{C}$.
\end{definition}

In the rest of the paper, we will assume the \textit{shapley value} as our contribution measure because a payoff vector proportional to the \textit{shapley value} preserves its desirable properties covered in Section 3.1.3.
\section{Desirable properties of outcomes in modified cooperative game}
Recall that an outcome of a cooperative game is defined by the resulting coalition structure formed and payoff vector defined over the coalitions in the coalition structure (Section 3.1.1). Some outcomes satisfy certain desirable properties in data sharing. Here, we summarise these properties.

Given a modified game with a group of $n$ players, a characteristic function $v(.)$ used to measure the value of predictive model created by the data sharing process and the subsequent \textit{shapley value} $<\phi_1, \phi_2, ... , \phi_n>$ derived as the contribution measure, we want an outcome consisting of a coalition structure $\mathcal{S}$ and payoff vector $<x_1,x_2,\dots,x_n>$ to have the following properties:
\begin{enumerate}
    \item \textbf{Formation of grand coalition} coalition structure $\mathcal{S}$ consists of a single grand coalition with $n$ players. That is, all players choose to cooperate in the data sharing process.
    \item \textbf{Stability (Definition 3)} All players continue to collaborate in a large group.
    \item \textbf{Fairness} \begin{enumerate}
        \item \textbf{Proportionality (Definition 4)} Players receive reward proportional to his relative contribution.
        \item \textbf{Null Player} If $\phi_i = 0$, then $x_i=0$; a player with no contribution gets zero reward. 
        \item \textbf{Symmetry} If $\phi_i = \phi_j$, then $x_i=x_j$; two players with equal contribution receives equal reward.
        \item \textbf{Order Preserving} If $\phi_i > \phi_j$, then $x_i>x_j$; if player $i$ has a lower contribution than player $j$, then player $i$'s reward is lower than that of player $j$.
    \end{enumerate}
\end{enumerate}
The rest of this paper investigates whether an outcome satisfying all the above properties (regarded as optimal) exists and other suboptimal outcomes.
\subsection{Assumptions of modified cooperative game}
Before we formalise the definition of an optimal outcome, we make the following basic assumptions regarding our modified game:
\begin{enumerate}
    \item We assume that characteristic function $v(.)$ used to value predictive models is \textbf{monotonic}. That is, if $C \subseteq C'$, then $v(C) \leq v(C')$; this assumption implies that the value (as valued by the characteristic function) of the model generated by the largest group of players $N$ is also the largest. This assumption implies that larger coalition generates more value than smaller ones.
    \item We use the \textit{\textbf{shapley value}}, as defined in Section 3.1.3, as the relative contribution measure $\mathcal{C}$ to gauge a player's contribution in the data sharing process. As mentioned, the \textbf{\textit{shapley value}} has many desirable properties.
\end{enumerate}
With these assumptions, we introduce a general algorithm to determine the reward allocated to each player in a modified cooperative game. This algorithm will be used by a central agency who receives the data from all participating players.
\begin{algorithm}[h]
\KwIn{Datasets of $n$ players $D_1, D_2,\dots,D_n$, characteristic function $v: 2^n \mapsto \mathcal{R}$ }
\KwOut{rewards $x_1,x_2,\dots,x_n$}

\nl $\phi \gets$ shapley($v(.), D_1,D_2,\dots,D_n$);

\nl \If{$getOptimalOutcome(\phi,v) \neq None$}{$x_1,x_2,\dots,x_n \gets getOptimalOutcome(\phi,v)$\;
return $x_1,x_2,\dots,x_n$}

\nl \Else{$x_1,x_2,\dots,x_n \gets getSuboptimalOutcome(\phi,v)$\;
return $x_1,x_2,\dots,x_n$}
\caption{{\bf Data-sharing and reward division} \label{Algorithm}}

\end{algorithm}

In the following sections, we analyse how functions $getOptimalOutcome$ and $getSuboptimalOutcome$ derive optimal and suboptimal outcomes. The central agency can use these results to allocate rewards for the participating players.
\section{Optimal outcome}

To contextualise an optimal outcome in our modified game to a description of a real-life data-sharing process, imagine $n$ data providers come together to create a predictive model from the pooled data. Then using a monotonic characteristic function $v(.)$ to measure the value of the predictive model, we are able to derive the relative contributions of the players with the \textit{Shapley value}. Subsequently, an optimal outcomes corresponds to a reward payoff for the players such that they have no incentives to break away from the grand coalition and the reward also commensurates each player's relative contribution. In this section, we investigate how we can find this optimal outcome efficiently.
\\ \\
We consider an outcome to a modified game \textit{optimal} if they satisfy all properties in the Section 5. Hence, the task of finding an optimal outcome becomes one of finding the feasible region of the division of rewards to the players $<x_1,x_2,\dots,x_n>$ such that all desirable properties are satisfied. However, this is computationally inefficient because the definition of stability alone implies that we need to check $2^n$ constraints. Fortunately, we show, in the following proposition, that the set of stable solution payoffs can be found by checking only $n$ constraints, given that the characteristic function $v(.)$ is monotonic.

\begin{proposition} \textbf{Stable Solution Set}
Given a monotonic valuation function $v(.)$, let $<x_1,x_2, ... ,x_n>$ be a payoff vector to $n$ agents, arranged in ascending order. Then this payoff vector is stable if and only if $\forall i \in \{1,2,\dots,n\}$,\begin{equation}
    x_i \geq v(\bigcup_{j\leq i} j)
\end{equation} 
\end{proposition}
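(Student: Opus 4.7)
The plan is to prove the biconditional directly from Definition 3 by exploiting the ascending order of the payoffs on one side and monotonicity of $v(\cdot)$ on the other. The key observation is that, once payoffs are sorted, the ``hardest'' coalitions to satisfy are precisely the prefixes $\{1,2,\dots,i\}$: any other coalition can be inflated to a prefix without losing a satisfying player and, by monotonicity, without decreasing its value. So the $2^n$ stability constraints collapse to the $n$ prefix constraints.

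For the forward direction, I would assume the payoff vector is stable in the sense of Definition 3 and fix any $i \in \{1,\dots,n\}$. Applying stability to the prefix coalition $C_i := \bigcup_{j \le i} j$ yields some $k \in C_i$ with $x_k \ge v(C_i)$. Because the payoffs are arranged in ascending order and $k \le i$, we get $x_i \ge x_k \ge v(C_i)$, which is exactly the stated inequality. This direction uses only the ordering; monotonicity plays no role here.

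For the reverse direction, assume $x_i \ge v(C_i)$ for every $i$, and take an arbitrary subcoalition $C \subseteq N$. Let $k = \max C$ be the index of the highest-ranked player in $C$; then $C \subseteq C_k$, so monotonicity gives $v(C) \le v(C_k) \le x_k$. Thus $k$ is a player in $C$ whose payoff dominates $v(C)$, verifying Definition 3. The witness $k$ is canonical: it is always the highest-index (highest-payoff) member of $C$, which is the step where the reduction from exponentially many constraints to $n$ is paid for by using monotonicity.

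Neither direction involves a serious obstacle; the only subtle point, and the one I would be careful to flag in the writeup, is that monotonicity of $v(\cdot)$ is essential for the ``if'' direction but unused in the ``only if'' direction. Without monotonicity one can construct a $C$ with $v(C) > v(C_k)$ despite $C \subseteq C_k$, breaking the reduction. Once that asymmetry is made explicit, the proof is just two short lines per direction.
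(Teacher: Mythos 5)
Your proof is correct and follows essentially the same route as the paper: the ``if'' direction takes the highest-indexed member of an arbitrary coalition as the witness and invokes monotonicity, while the ``only if'' direction applies the stability definition to each prefix coalition and uses the ascending ordering to conclude $x_i \geq v(\bigcup_{j\leq i} j)$. No further comparison is needed.
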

\begin{proof}
if $\forall i, x_i \geq v(\bigcup_{j\leq i} j)$, then for any sub coalition $C^{'}$ containing a subset of agents, the agent in $C^{'}$ with the largest index (we refer to this index as $k$) would be such that $x_k \geq v(\bigcup_{j\leq k} j)$. Since $k$ is the largest index in $C^{,}$, it follows that $C^{'} \subseteq \bigcup_{j\leq k} j$ and because of our assumption that $v(.)$ is monotonic, we have $x_k \geq v(\bigcup_{j\leq k} j) \geq v(C^{'})$. Thus by definition of stability, the payoff vector $<x_1,...,x_n>$ is stable. 

Conversely, if the payoff vector is stable, then by definition, for all index $k$ and for every subcoalition $\bigcup_{j\leq k} j$, there exists at least one agent with payoff larger than $v(\bigcup_{j\leq k} j)$. Since, $x_k$ is the highest payoff received in this subcoalition (because we rearranged the payoff index in ascending order), we have $x_k \geq v(\bigcup_{j\leq k} j)$ for all $k$.
\end{proof}
This theorem implies that to ensure players are rewarded such that they have no incentive to break off and form smaller private coalitions, their reward must be larger than the value of model created by the union of all players with less or equal contribution as him. This is a simple definition but also an extremely important one because it allows us to define the space of rewards such that no players are incentivised to break off from the grand coalition. 
\begin{figure}[h]
\includegraphics[scale=0.7]{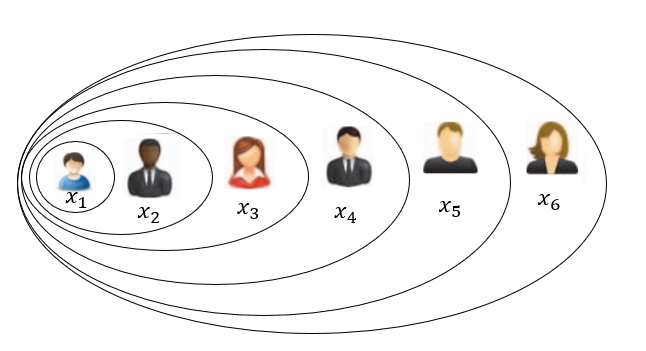}
\centering
\caption{Visualisation of \textbf{Proposition 5}. Lining up the players in ascending contribution, the payoff $x_i$ to each player must be larger than the value created by the union of previous players to ensure stability.}
\end{figure}

With the above proposition on finding the stable solution set, it is then natural for us refine this \textbf{stable set} with properties related to \textbf{fairness} to define an \textbf{optimal outcome}:
\begin{theorem}
 \textbf{Optimal outcome}
Given a monotonic valuation function $v(.)$ with the value created by grand coalition equals $v(N)$, let $<x_1,x_2, ... ,x_n>$ be a solution payoff to $n$ agents, arranged in ascending order and $<\phi_1, \phi_2,...,\phi_n>$ be the relative contribution of the agents of the same index. Then this solution payoff is \textbf{stable and proportional} if and only if $\forall i \in \{1,2,\dots,n\}$, 
\begin{equation}
    x_i = \phi_i\frac{v(N)}{\phi_n} \geq v(\bigcup_{j\leq i} j)
\end{equation}
An outcome with such a solution payoff is then said to be \textbf{optimal}.
\end{theorem}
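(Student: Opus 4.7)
The plan is to combine Proposition 5 with Definition 4 and the per-player bound from Definition 2. First I would invoke Proposition 5: since $v$ is monotonic and the payoffs are indexed in ascending order, stability is equivalent to the $n$ inequalities $x_i \geq v(\bigcup_{j \leq i} j)$ for $i = 1, \dots, n$.

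Next I would unwind proportionality. By Definition 4 a proportional payoff has the form $x_i = \alpha \phi_i$ for some $\alpha > 0$, and Definition 2 applied to the grand coalition requires $x_i \leq v(N)$ for every $i \in N$. Taking $i = n$ (the largest index, hence largest Shapley value), this gives $\alpha \phi_n \leq v(N)$, i.e.\ $\alpha \leq v(N)/\phi_n$. Saturating this bound yields the canonical proportional payoff $x_i = \phi_i v(N)/\phi_n$, which maximizes each player's reward subject to the constraint. Substituting this explicit form into the Proposition 5 inequalities gives the claimed condition $\phi_i v(N)/\phi_n \geq v(\bigcup_{j \leq i} j)$ for every $i$.

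The reverse direction is immediate: a payoff given by the explicit formula is proportional (with $\alpha = v(N)/\phi_n > 0$ in Definition 4), and the assumed inequalities are exactly the Proposition 5 stability conditions, so the vector is stable.

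The hard part will be justifying the forward direction's choice of $\alpha = v(N)/\phi_n$ rather than any $\alpha \in (0, v(N)/\phi_n]$: strictly from Definition 4 alone any positive $\alpha$ yields a proportional payoff, so I expect the argument to rely on an implicit reward-maximization convention bundled into the notion of an \emph{optimal} outcome --- namely, that since the predictive model is non-rivalrous, there is no reason to under-reward, and the top contributor should therefore receive the full value $v(N)$. This plays the role of the efficiency axiom from the classical Shapley value in the modified game. Once that convention is identified and justified, the rest of the proof reduces to a one-line substitution into Proposition 5.
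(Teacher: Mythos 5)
Your reverse direction and your use of Proposition 5 match the paper, but your forward direction has a genuine gap, and the fix you propose for it goes in the wrong direction. You correctly derive the upper bound $\alpha \leq v(N)/\phi_n$ from Definition 2 applied to the grand coalition, but you then ``saturate'' this bound and flag the choice $\alpha = v(N)/\phi_n$ as something requiring an extra reward-maximization convention about non-rivalrous goods. No such convention is needed, and appealing to one would not prove the theorem, because the statement is an equivalence with \emph{stable and proportional} exactly as defined --- so the specific value of $\alpha$ must be forced by those definitions, not adopted as a policy. The missing observation is that the Proposition 5 system you already wrote down, instantiated at $i=n$, gives $x_n \geq v(\bigcup_{j\leq n} j) = v(N)$, i.e.\ $\alpha\phi_n \geq v(N)$; combined with your Definition 2 bound $\alpha\phi_n \leq v(N)$ this forces $x_n = v(N)$ and hence $\alpha = v(N)/\phi_n$ exactly. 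The scaling factor is pinned down from both sides by constraints you had already listed, and you simply did not assemble them.

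This is precisely the paper's argument: stability makes the highest-indexed player's payoff at least $v(N)$, the modified payoff-vector definition caps any individual payoff at $v(N)$, so the top contributor receives exactly $v(N)$, proportionality then scales every $\phi_i$ by $v(N)/\phi_n$, and the displayed inequalities are just the remaining stability conditions from Proposition 5. Once you replace your ``implicit convention'' step with this two-sided pinning of $\alpha$, your proof is complete and coincides with the paper's.
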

\begin{proof}
In a stable solution, the player with the highest index receives a payoff $x_n \geq v(\bigcup_{j\leq n} j) = v(N)$ by definition of stability. This implies that the agent receiving the highest payoff must receive $v(N)$. By observation, for the solution to be proportional, this agent also has the highest contribution $\phi_n$ and thus every other agent $i$ has its contribution $\phi_i$ scaled by a factor of $\frac{v(N)}{\phi_n}$. The inequality follows naturally in order to abide to the condition of stability.

Now, since the \textit{shapley value} is used as the relative contribution measure, the \textbf{proportional and stable} solution payoff inherits other \textbf{fairness} properties: \textbf{Null Player, Symmetry, Order Preserving}. As such, an outcome with such a solution payoff satisfies all properties in Section 5 and is \textbf{optimal}.
\end{proof}
\textbf{Theorem 6} gives us an optimal outcome of a multi-agent data sharing process formulated as a modified cooperative game. Intuitively, this theorem tells us that an optimal reward allocation to each player is such that it is proportional to the contribution measure and also large enough to prevent them from breaking away from the grand coalition.

The central agency mediating the process could use the theorem to check if an optimal outcome is possible and if so, use it to allocate rewards to the participating players appropriately. It is also not difficult to see that the theorem implies that an optimal outcome, if it exists, is unique.

\section{Suboptimal Outcomes}
The optimal outcome for a group of $n$ players may not exist given a particular characteristic function and contribution measure indicated by the \textit{Shapley value}. That is, there may be no outcomes satisfying \textbf{Theorem 6}. This implies that in a data sharing process, the central mediator may not find an allocation of the final reward such that all players regard it as fair and stable at the same time.

Nevertheless, there are still suboptimal outcomes which are of interest to us; in particular, since having a \textbf{stable and proportional solution payoff} is necessary and sufficient for an optimal outcome, we can restrict our analysis on suboptimal outcomes with the following solution payoffs:
\begin{itemize}
    \item a solution payoff which is stable but not proportional
    \item a solution payoff which is proportional but not stable
\end{itemize}
Analysis of these two classes of suboptimal outcomes gives us an idea on how to derive a suboptimal outcome which is amenable to the players when an optimal one is not achievable. As a mediator promoting the data sharing process between players, a government agency can derive a suboptimal outcome with some but not all desirable properties and perhaps still convince the players to continue collaborating in the data-sharing process, especially if the suboptimal outcome does not violate stability or proportionality too much.

\subsection{Stable but not Proportional outcome}
This outcome implies that each player will receive a reward that ensures he is not incentivised by larger rewards elsewhere to leave the grand coalition. However, some players may receive a reward that is not proportional to his relative contribution and if players can tolerate some disproportionality, then this outcome is still acceptable.

Let $<\phi_1, ... , \phi_n>$ be the shapley value of the agents arranged in ascending order. Proportionality holds between the payoffs $x_i,x_j$ of any two players $i$ and $j$ if $\frac{\phi_i}{\phi_j} = \frac{x_i}{x_j}$. Hence, if proportionality cannot be achieved in a stable outcome, we need a metric to measure the degree of proportionality violation in a given payoff vector and select the stable outcome which violates this metric the least.

The concept of proportionality deviation is widely studied in proportional representation systems such as allocation of parliament seats\cite{11}.  For example, we can measure the sum of pairwise absolute proportionality violations between the reward received by all players: \begin{equation}
    Deviation_{sum}= \sum_{i,j \in N} \norm{\frac{\phi_i}{\phi_j}-\frac{x_i}{x_j}}_p
\end{equation}
This certainly is not the only available deviation measure. One alternative can be the max individual deviation from proportionality. This measure is useful if we assume agents are able to tolerate some deviation from proportionality (a certain level of unfairness): \begin{equation}
    Deviation_{max}= max\left( \norm{\frac{\phi_i}{\phi_j}-\frac{x_i}{x_j}}_p\right)_{i,j}
\end{equation}
Hence, finding a stable solution with the lowest deviation measure can be formulated as the following optimisation problem:
\begin{subequations}
\begin{alignat}{2}
&\! \min_{x_1,...,x_n}        &\qquad& DeviationMeasure\label{eq:optProb}\\
&\text{subject to} &      & x_i \geq v\left (\bigcup_{j\leq i} j\right),\label{eq:constraint1}\\
&                  &      & \text{additional constraints}.\label{eq:constraint2}
\end{alignat}
\end{subequations}
Objective function (8a) represents a deviation measure such as (6) or (7); constraint (8b) represents the solution payoff's stability constraint, since we need to outcome to be stable; constraints (8c) represents additional constraints which may be required if one wishes to enforce certain additional \textbf{Fairness} properties (\textbf{Symmetry, Order Preserving, Null Player}), since without which, a non-proportional solution may not necessarily guarantee these properties. 
\\ \\
We acknowledge that finding an appropriate deviation measure is a non-trivial task as there are drawbacks to different deviation measures (\cite{11} mentions 19 different kinds of proportionality deviation indices!); for example, measure (5) may assign large disproportional reward to a single player while measure (6) yields non-unique solutions. As such, the central agency in the data sharing process needs to design a deviation measure appropriate to reconcile a reasonable non-proportional outcome. However, empirical results at the end of the paper show that $Deviation_{sum}$ gives reasonable outcomes.

\subsection{Proportional but not stable outcome}
A proportional but not stable outcome implies that while the reward given to each player is fair, some players may be incentivised to break away from the grand coalition for better rewards. To tackle this realistically in real life,  the central agency must first find out how "far away" the proportional solution is from each player's lower bounds of stability. 
\begin{definition} \textbf{$\epsilon$-Stability} Let $<x_1, ..., x_n>$ be the solution payoff to the agents, then this solution payoff is $\epsilon$\textbf{-stable} if for all $C \subseteq N$, there exists $i \in C$ such that $x_i \geq (v(C) - \epsilon)$
\end{definition}
Notice \textbf{Definition 7} is identical to the stability definition apart from the a deduction of $\epsilon$ in the lower bound. If $\epsilon > 0$, then we can understand $\epsilon$ as the penalty each player pays for deviating from the grand coalition to form a sub-coalition. As such, when a solution is $\epsilon$\textbf{-stable}, if players have to pay a penalty larger than $\epsilon$ to break off from the grand coalition, then these players will remain in the grand coalition because they will be unable to reap higher rewards elsewhere after paying the penalty to leave.

Instead of being viewed as a penalty, $\epsilon$ can also be viewed as additional compensation to incentivise players to stay in the grand coalition. The central agency could try to give additional benefits equivalent to $\epsilon$ to ensure players remain in the grand coalition (of course, this requires agencies and players to discuss what denomination these benefits come in). The following corollary allows one to find this $\epsilon$.
\begin{corollary}
Let $<x_1,x_2,...,x_n>$ be a proportional but not stable payoff vector. Define
\[
  d_i =
  \begin{cases}
                    0 & \text{if $x_i \geq v(\bigcup_{j\leq i} j)$} \\
                    v(\bigcup_{j\leq i} j)-x_i & \text{otherwise} \\
  \end{cases}
\]
Then the payoff vector is $\epsilon$-stable if $\epsilon \geq \max\limits_{i} (d_i)$.
\end{corollary}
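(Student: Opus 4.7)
The plan is to adapt the argument from Proposition 5 directly, since this corollary is essentially a slack refinement of that result. As in the setup of Proposition 5, I would index the players in ascending order of contribution, which coincides with ascending order of $x_i$ because the payoff vector is proportional. The key observation is that for any subcoalition $C \subseteq N$, the agent $k = \max\{i : i \in C\}$ satisfies $C \subseteq \bigcup_{j \leq k} j$, so monotonicity of $v(\cdot)$ would give $v(C) \leq v(\bigcup_{j \leq k} j)$.

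From there, I would invoke the definition of $d_k$ to conclude $x_k \geq v(\bigcup_{j \leq k} j) - d_k$: either $d_k = 0$, in which case $x_k$ already dominates the prefix-union value, or $d_k$ was defined precisely to close the gap. Chaining this inequality with monotonicity gives $x_k \geq v(C) - d_k$, and since $\epsilon \geq \max_i d_i \geq d_k$, we would get $x_k \geq v(C) - \epsilon$. Because this holds for every $C \subseteq N$ with the witnessing player $k$ lying in $C$, Definition 7 is satisfied and the payoff vector is $\epsilon$-stable.

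I do not expect any real obstacle here: the argument is a one-step adaptation of the stability direction of Proposition 5, and no new machinery is needed beyond recognising that $d_k$ is exactly the slack required to restore the inequality $x_k \geq v(\bigcup_{j \leq k} j)$ used in that earlier proof. The only mild subtlety worth emphasising in the write-up is that proportionality ensures the ascending-index convention is consistent for both $\phi_i$ and $x_i$, so the prefix unions $\bigcup_{j \leq i} j$ on which the definition of $d_i$ depends are unambiguous; without proportionality one would have to be careful that the indexing used to define $d_i$ matches the indexing used to pick the witness $k$ in each $C$.
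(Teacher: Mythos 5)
Your proposal is correct and matches the paper's own proof essentially step for step: both take the highest-indexed agent $k$ in an arbitrary subcoalition $C$, use monotonicity to get $v(C) \leq v(\bigcup_{j\leq k} j)$, and then use $d_k$ (hence $\epsilon \geq \max_i d_i \geq d_k$) as the slack to conclude $x_k \geq v(C) - \epsilon$. In fact your write-up is slightly cleaner, since the paper's final displayed inequality contains a sign slip (it writes $x_k \leq v(C) - \epsilon$ where the intended conclusion is $x_k \geq v(C) - \epsilon$), which your chain of inequalities states correctly.
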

\begin{proof} 
Given any solution payoff $<x_1,...,x_n>$, since for any subcoalition $C$, there exists $i$ such that $ C \subseteq \bigcup_{j\leq i} j$ and thus $ v(C) \leq v(\bigcup_{j\leq i} j)$ by the monotonicity of $v(.)$. It follows directly that for every agent $k$ and every subcoalition $C$ with $k$ being the highest indexed agent belonging to this subcoalition, $\epsilon \geq \max\limits_{i} (d_i) \geq v(C) - x_k \xrightarrow{} x_k \leq (v(C) - \epsilon)$.
\end{proof}

\section{Sensitivity Analysis} Given that an optimal outcome is decided for a modified game where players have received a learnt model of certain value, there can be many elements that influence the process retrospectively. Sensitivity analysis allows us to investigate how sensitive the optimal outcome is towards such changes. In particular, we wish to investigate if an outcome will remain optimal with such changes. Even if such perturbations have not occurred, the central agency can perform sensitivity analysis pre-emptively to investigate how robust the current optimal outcome is towards possible marginal changes in the future.
\begin{figure}[h]
\includegraphics[scale=0.7]{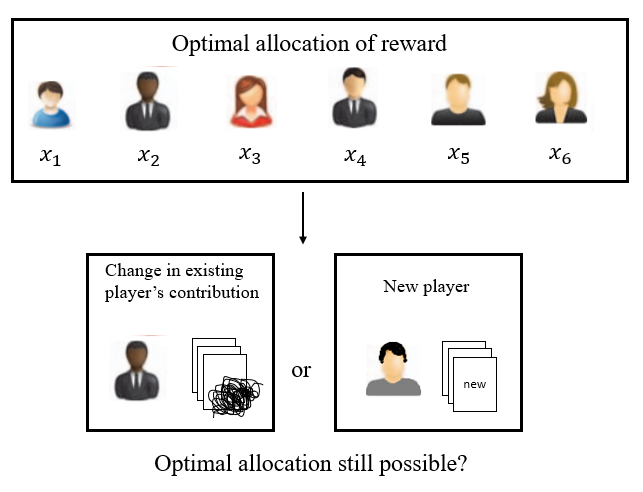}
\centering
\caption{Sensitivity analysis of two possible scenarios}
\end{figure}

For example, one may realise retrospectively that a segment of a player's data is unusable due to privacy concerns and needs to be removed from the model created (leading to a marginal decrease in contribution of that player); on the other hand, a player could have his relative contribution increased marginally by a constant amount because he was able to offer some non-data related help in the data sharing process (e.g offering GPU processing power).
\\
In addition, new players may enter the collaboration process. While we can reinstate the entire game with an additional player, the process to calculate the \textit{Shapley value} of all players again using $v(.)$ requires us to pool the data in $2^n$ subsets, making the process time consuming. As such, if an expert can estimate the marginal contribution of the new player, we show in the following section that we can estimate if an optimal outcome is still possible without complex recalculations.
\subsection{Inclusion of an new agent}
First, we estimate the new agent's contribution relative to existing agents using some expert opinion (making educated guess based on some features surrounding the data set or the data provider). Second, we estimate how much each coalition's value changes due to the inclusion of the new agent. For the first estimate, we let the index of the new agent be $new$ and introduce $\phi_{new}$ to the existing contribution vector (whilst preserving the ascending order of the vector): $<\phi_1,\phi_2,...,\phi_n> $ becomes $ <\phi_1,\phi_2,...,\phi_{new},...,\phi_n>$. For the second estimate, it is computationally expensive to derive the value of each subcoalition containing new agent $new$; for ease of computation, we assume that for every subcoalition $C$ in the original group of agents, $v'(C \cup new) \approx v(C) + \phi_{new}$. This implies the marginal increase in a value of a subcoalition by agent $new$ is the same and leads to an overall relative contribution  $\phi_{new}$, as calculated by the shapley value formula.
\\ \\
Let the new grand coalition, $N'=(N \hspace{1mm} \cup \hspace{1mm} new)$, contain $n+1$ players. Hence, $v'(N') = v'(N \hspace{1mm} \cup \hspace{1mm} new) \approx v(N) + \phi_{new}$. Also assume the current outcome containing the payoff vector $<x_1,...,x_n>$ is stable, proportional and thus optimal. We use $w$ to indicate index of agents with lower contribution than $new$ and $s$ to indicate index of agents with higher contribution than $new$.
\begin{theorem}
A stable and proportional solution, and thus an optimal outcome, is still possible with a new player with marginal contribution $\phi_{new}$ to all existing subcoalitions if and only if $\phi_{new}$ satisfies the following conditions:

\begin{align}
\label{eqn:eqlabel}
\begin{split}
 & \forall s, \hspace{2mm} \phi_{s} \frac{v(N)+\phi_{new}}{\phi_{max}} \geq  v\left(\bigcup_{j\leq s} j \middle\backslash new \right) + \phi_{new},
\\
 & \phi_{new}\frac{v(N)+\phi_{new}}{\phi_{max}} \geq v(\bigcup_{j < new}j) + \phi_{new}
\end{split}
\end{align}
\end{theorem}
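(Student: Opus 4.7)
The plan is to invoke Theorem 6 on the enlarged grand coalition $N' = N \cup \{new\}$ and then translate each of its $(n+1)$ stability-and-proportionality inequalities through the approximation $v'(C \cup \{new\}) \approx v(C) + \phi_{new}$ adopted just above. By Theorem 6 applied to $N'$, a proportional-and-stable payoff must take the form $x_i = \phi_i (v(N) + \phi_{new})/\phi_{max}$ and must satisfy $x_i \geq v'(\bigcup_{j \leq i} j)$ for every player $i$ in the ascending contribution order on $N'$; it therefore suffices to identify which of these $(n+1)$ prefix inequalities produce genuinely new, non-redundant constraints.

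I would then partition the indices of $N'$ into the three classes the paper sets up: the $w$-agents with contribution strictly below $\phi_{new}$, the inserted agent $new$, and the $s$-agents with contribution strictly above $\phi_{new}$. For each class I compute the right-hand side $v'(\bigcup_{j \leq i} j)$ using the marginal-value approximation. For an $s$-agent the prefix contains $new$, so $v'(\bigcup_{j \leq s} j) \approx v(\bigcup_{j \leq s} j \setminus new) + \phi_{new}$, and substituting $x_s$ gives the first displayed inequality. For $new$ itself the prefix is $(\bigcup_{j < new} j) \cup \{new\}$, so $v'$ evaluates to $v(\bigcup_{j < new} j) + \phi_{new}$ and substituting $x_{new}$ gives the second. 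These two computations show both theorem conditions are necessary.

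For sufficiency I would argue that the $w$-agent constraints are redundant, inherited from the original optimality of $\langle x_1,\dots,x_n\rangle$ together with monotonicity of $v$. The prefix $\bigcup_{j \leq w} j$ contains no occurrence of $new$, so its $v'$-value is unchanged, while in the typical regime $\phi_{new} \leq \phi_n$ the new scaling factor $(v(N)+\phi_{new})/\phi_{max}$ meets or exceeds the old factor $v(N)/\phi_n$, so the original constraint still forces $x_w \geq v(\bigcup_{j \leq w} j)$. I anticipate the main obstacle to be the boundary regime $\phi_{new} > \phi_n$, where $\phi_{max}$ switches to $\phi_{new}$ and the scaling factor can actually shrink; there one must verify algebraically, using monotonicity of $v$ and the already-established $new$-inequality, that $\phi_w (v(N)+\phi_{new})/\phi_{new} \geq v(\bigcup_{j \leq w} j)$ still holds whenever $\phi_w v(N)/\phi_n \geq v(\bigcup_{j \leq w} j)$ did. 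Once this residual redundancy is handled, the two displayed inequalities are simultaneously necessary and sufficient, completing the proof.
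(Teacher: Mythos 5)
Your overall route is the paper's own: apply Theorem 6 to the enlarged game $N' = N \cup \{new\}$, substitute the estimation $v'(C \cup \{new\}) \approx v(C) + \phi_{new}$ on both sides of each prefix constraint, and read off the two displayed inequalities from the $s$-agents and from $new$ itself. Your necessity computations are exactly the paper's chain of equivalences, and your observation that the $w$-prefixes contain no occurrence of $new$ makes explicit what the paper only remarks after the proof, namely that the new player does not affect the conditions on lower-contribution players.

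The one genuine problem is the step you defer: in the regime $\phi_{new} > \phi_n$ the redundancy of the $w$-constraints cannot be ``verified algebraically,'' because it is false. Take $n=2$ with $v(\{1\})=60$, $v(\{2\})=80$, $v(N)=100$, so $\phi_1=40$, $\phi_2=60$, and the original payoff $x_1 = 40\cdot 100/60 \approx 66.7 \geq 60$, $x_2 = 100$ is optimal. Now let $\phi_{new}=1000$: there are no $s$-agents, and the $new$-inequality holds with equality ($1000\cdot 1100/1000 = 1100 \geq v(N)+1000$), yet player 1 would need $40\cdot 1100/1000 = 44 \geq v(\{1\}) = 60$, which fails; so both displayed conditions hold while no stable proportional payoff exists. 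The resolution is scope, not algebra: the theorem's setup inserts $\phi_{new}$ strictly inside the ordered vector $\langle \phi_1,\dots,\phi_{new},\dots,\phi_n\rangle$, i.e.\ it tacitly keeps $\phi_{max}=\phi_n$, and under that reading the new scale factor $(v(N)+\phi_{new})/\phi_{max}$ can only exceed the original $v(N)/\phi_n$, so your first sufficiency argument already closes every $w$-constraint and the proof is complete. Either state that standing assumption ($\phi_{new} \leq \phi_n$) explicitly or restrict the ``if and only if'' accordingly; as written, your final paragraph promises a verification that does not exist.
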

\begin{proof}
\textbf{Theorem 8} stems directly from optimality conditions in \textbf{Theorem 6}. For the first inequality in (9), in a game with a new player with $\phi_{new}$, an optimal outcome is possible if and only if
\begin{equation} \label{eq1}
\begin{split}
\forall s, & \hspace{2mm} x_s = \phi_s \frac{v'(N')}{\phi_{max}} \geq v'(\bigcup_{j \leq s}j) \hspace{53mm} \text{(Theorem 6)} \\
 & \iff \phi_s \frac{v(N) + \phi_{new}}{\phi_{max}} \geq v'(\bigcup_{j \leq s}j) \hspace{40mm} \text{(Estimation of $v'$ on the left)} \\ & \iff \phi_s \frac{v(N) + \phi_{new}}{\phi_{max}} \geq v\left(\bigcup_{j\leq s} j \middle\backslash new \right) + \phi_{new} \hspace{10mm} \text{(Estimation of $v'$ on the right)}
\end{split}
\end{equation}
The proof is identical for the second inequality in (9).
\end{proof}
Notice that all terms except $\phi_{new}$ in the above theorem is already computed in the original game. Hence, all we have to do is check if $\phi_{new}$ satisfies the inequalities. Furthermore, it is interesting to note that if an outcome is already optimal, the entry of a new player does not influence the optimality conditions on existing players with lower relative contributions than the new player.
\subsection{Perturbation of contribution of one particular player}
Next, we analyse the case where all subcoalitions containing a particular player $i$ is changed by a constant value. As mentioned, this may occur when the central agency retrospectively needs to perturb the value of one player's data due to privacy concerns or if we retrospectively realise that a segment of a player's data is corrupted. Again, we make an estimation of the new value generated by all coalitions containing player $i$ in the following manner: let $v'(C) = v(C)+\delta$ for all coalition $C$ containing agent $i$. Then this implies a linear shift in player $i$'s \textit{shapley value}: $\phi_{i}^{new} = \phi_{i} + \delta$ (by definition of \textit{shapley value}). Also assume that $\delta$ is such that the order of relative contribution of agents does not change and the current outcome containing the payoff vector $<x_1, ..., x_n>$ is already optimal.

For a particular agent of index $k$, let $d_k = \frac{\phi_k}{\phi_{max}}v(N) - v(\bigcup_{j\leq k} j) \geq  0$, which always holds because the current soluton is optimal. Again, we use $w$ to indicate index of agents with lower contribution than $i$ and $s$ to indicate index of agents with higher contribution than $i$. \begin{theorem}
A stable and proportional solution, and thus an optimal outcome, is still possible under $\phi_i^{new} = \phi_i + \delta$ for player $i$ if and only if $\delta$ satisfies the following conditions: 
\begin{equation}
\begin{split}
 & \forall w, \hspace{2mm} \delta \geq - \frac{\phi_{max}}{\phi_w}d_w , 
\\
 &  \forall s, \hspace{2mm} \delta \leq \frac{d_s \phi_{max}}{\phi_{max}-\phi_s},
 \\
 & \delta^2 + (\phi_i + v(N) - \phi_{max})\delta +d_i\phi_{max} \geq 0, 
\end{split}
\end{equation}
\end{theorem}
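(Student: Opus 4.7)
The plan is to derive each of the three inequalities in (10) by applying the optimality characterization of Theorem 6 to the perturbed game and then simplifying algebraically. Under the assumption that the order of Shapley values is preserved and (implicitly) that $i$ is not the top-ranked player, we have $\phi_{max}^{\,new}=\phi_{max}$, $\phi_i^{\,new}=\phi_i+\delta$, and $\phi_k^{\,new}=\phi_k$ for all $k\neq i$. Because $i\in N$, the grand coalition's value shifts to $v'(N)=v(N)+\delta$, and for a prefix $\bigcup_{j\le k}j$ we have $v'(\bigcup_{j\le k}j)=v(\bigcup_{j\le k}j)$ when $k<i$ (the prefix omits $i$) and $v'(\bigcup_{j\le k}j)=v(\bigcup_{j\le k}j)+\delta$ when $k\ge i$. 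By Theorem 6, the perturbed outcome is optimal iff for every index $k$ the proportional payoff $x_k^{\,new}=\phi_k^{\,new}\,v'(N)/\phi_{max}^{\,new}$ satisfies $x_k^{\,new}\ge v'(\bigcup_{j\le k}j)$, so I would check this case by case.

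First, for a player $w$ with lower contribution than $i$, the inequality becomes $\phi_w(v(N)+\delta)/\phi_{max}\ge v(\bigcup_{j\le w}j)$. Subtracting $\phi_w v(N)/\phi_{max}$ from both sides and using the definition $d_w=\phi_w v(N)/\phi_{max}-v(\bigcup_{j\le w}j)\ge 0$ gives $\phi_w\delta/\phi_{max}\ge -d_w$, which rearranges to the first inequality of (10). Second, for a player $s$ with higher contribution than $i$, the extra $\delta$ on the right-hand side survives, yielding $\phi_s(v(N)+\delta)/\phi_{max}\ge v(\bigcup_{j\le s}j)+\delta$; grouping the $\delta$ terms produces $d_s\ge\delta(\phi_{max}-\phi_s)/\phi_{max}$, and since $\phi_{max}-\phi_s\ge 0$ this divides cleanly into the stated upper bound on $\delta$.

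The main obstacle is the third inequality, which governs player $i$ itself, since both the left-hand and right-hand sides shift by $\delta$ and the left side carries a product $(\phi_i+\delta)(v(N)+\delta)$ that is quadratic in $\delta$. Expanding the condition $(\phi_i+\delta)(v(N)+\delta)/\phi_{max}\ge v(\bigcup_{j\le i}j)+\delta$ and multiplying through by $\phi_{max}>0$ gives
\begin{equation*}
\delta^2+(\phi_i+v(N)-\phi_{max})\delta+\phi_i v(N)-\phi_{max}v(\bigcup_{j<new?}j) \ge 0,
\end{equation*}
and then rewriting $\phi_i v(N)-\phi_{max}v(\bigcup_{j\le i}j)=\phi_{max}d_i$ yields exactly the third line of (10). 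Because every manipulation above is an equivalence (both sides are divided only by positive quantities and no square roots are taken), the three derived inequalities are jointly necessary and sufficient for each of the Theorem 6 conditions to hold, which by Theorem 6 is in turn necessary and sufficient for the perturbed outcome to remain stable and proportional, i.e.\ optimal.
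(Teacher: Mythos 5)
Your proposal is correct and follows essentially the same route as the paper: apply the optimality characterization of Theorem 6 to the perturbed quantities, split into the cases of players below $i$, above $i$, and $i$ itself (where the product $(\phi_i+\delta)(v(N)+\delta)$ yields the quadratic), and rewrite each condition via $d_k = \frac{\phi_k}{\phi_{max}}v(N)-v(\bigcup_{j\leq k}j)$; you even make explicit the tacit assumption that $\phi_{max}$ is unchanged, which the paper leaves implicit. The only blemish is the stray ``$\bigcup_{j<new?}j$'' in your displayed constant term, which should read $v(\bigcup_{j\leq i}j)$ as your subsequent sentence correctly uses.
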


\begin{proof}
We prove all three inequalities in \textbf{Theorem 9} in order. An optimal outcome is possible under new conditions where player's $i$'s contribution is perturbed by $\delta$ if for the first inequality: 
\begin{equation} \label{eq2}
\begin{split}
\forall w, & \hspace{2mm} x_w = \phi_w \frac{v(N')}{\phi_{max}} \geq v'(\bigcup_{j \leq w}j) \hspace{53mm} \text{(Theorem 6)} 
\\ & 
\iff \phi_w \frac{v(N) + \delta}{\phi_{max}} \geq v(\bigcup_{j \leq w}j) \hspace{40mm} \text{(Estimation of $v'$)} \\ &
 \iff \phi_w \frac{v(N)}{\phi_{max}}-v(\bigcup_{j \leq w}j) \geq -\frac{\phi_w}{\phi_{max}}\delta \hspace{10mm} \\ &
 \iff d_w \geq -\frac{\phi_w}{\phi_{max}}\delta \hspace{52mm} \text{(By definition of $d_w$)} \\ &
 \iff \delta \geq -\frac{\phi_{max}}{\phi_w} d_w 
\end{split}
\end{equation}
For the second inequality:
\begin{equation}
\begin{split}
    \forall s, & \hspace{2mm} x_s = \phi_s \frac{v(N')}{\phi_{max}} \geq v'(\bigcup_{j \leq s}j) \hspace{63mm} \text{(Theorem 6)} 
\\ & 
\iff \phi_s \frac{v(N) + \delta}{\phi_{max}} \geq v(\bigcup_{j \leq s}j) + \delta \hspace{40mm} \text{(Estimation of $v'$)} 
\\ & 
\iff \phi_s \frac{v(N)}{\phi_{max}}-v(\bigcup_{j \leq s}j) \geq -\frac{\phi_s}{\phi_{max}}\delta + \delta
\\ &
\iff d_s \geq (-\frac{\phi_s}{\phi_{max}} + 1)\delta \hspace{49mm} \text{(By definition of $d_s$)} \\ &
 \iff \delta \leq \frac{\phi_{max}}{\phi_{max}-\phi_s} d_s 
\end{split}
\end{equation}
Lastly, for the third equality:
\begin{equation}
\begin{split}
    & x_i = (\phi_i + \delta) \frac{v(N')}{\phi_{max}} \geq v'(\bigcup_{j \leq i}j) \hspace{63mm} \text{(Theorem 6)} 
\\ & \iff (\phi_i+\delta) \frac{v(N) + \delta}{\phi_{max}} \geq v(\bigcup_{j \leq i}j) + \delta \hspace{38mm} \text{(Estimation of $v'$)}
\\ & \iff \phi_i \frac{v(N)}{\phi_{max}}-v(\bigcup_{j \leq i}j) +\frac{\phi_i\delta + \delta v(N) + \delta^2}{\phi_{max}} \geq \delta
\\ & \iff d_i + \frac{\phi_i\delta + \delta v(N) + \delta^2}{\phi_{max}} \geq \delta  \hspace{43mm} \text{(By definition of $d_i$)}
\\ & \iff
d_i \phi_{max} + \phi_i \delta + \delta v(N) + \delta^2 \geq \delta \phi_{max}
\\ & \iff 
\delta^2 + (\phi_i + v(N) - \phi_{max})\delta +d_i\phi_{max} \geq 0
\end{split}
\end{equation}
\end{proof}
To make sense of the inequalities presented in \textbf{Theorem 9}. We first notice that in the trivial case where $\delta=0$, each inequality holds because $d_k \geq 0$ for any $k$. 

Furthermore, when $\delta > 0$ (when the contribution of a player is increased), then the first inequality is true because the right expression in the last line of (12) is always negative; the third inequality also holds true because the quadratic inequality on the last line of (14) has positive coefficients. This implies that we only have to check whether the second inequality holds true when $\delta > 0$.

Lastly, when $\delta < 0$ (when the contribution of a player is decreased), the second inequality holds true; that is, we do not have to worry about infeasibility of payoffs to players with lower contribution than the perturbed player. For sanity check, we also notice that there exists some $\delta < 0$ such that the first inequality holds true in the last line of (12) since the right term is negative. Lastly, it is easily verifiable that the third inequality, the quadratic inequality $\delta^2 + (\phi_i + v(N) - \phi_{max})\delta +d_i\phi_{max} \geq 0$ holds true for some values of $\delta <0$ because the coefficients are all non-negative.

\section{Experiments}

Let $D_i$ be the dataset held by player $i$. We emulate a multi-party machine learning process through the use of one-dimensional synthetic datasets $D_1,D_2,\dots,D_7$ held by 7 data providers and demonstrate the properties and outcomes mentioned in our paper. In particular, we demonstrate the following: \begin{enumerate}
    \item Desirable properties of the \textit{Shapley value} in measuring the contribution of each player's dataset based on two different characteristic functions - \textit{Fisher Information} and \textit{Mutual Information}.
    \item Optimality of outcomes induced by these characteristic functions (i.e is there a fair and stable way to reward the players?)
    \item Deriving a suboptimal outcome when an optimal outcome is not achievable.
    \item Will an optimal outcome still be achievable if we perturb the contribution of one player marginally?
\end{enumerate}
In addition to demonstrating the above, we also offer some intuitive and useful insights to various observations made in the experiments with regards to the data sharing process. We also explain why we select \textit{Fisher Information} and \textit{Mutual Information} as characteristic functions over other alternatives.
\subsection{Introducing two different characteristic function}
As mentioned previously, the characteristic function $v(.)$ used to measure the value of predictive model created should be reasonable and context dependent. One immediate idea that one has with regards to evaluating models is to evaluate its performance with a test data set. While this seems promising in evaluating the power of each data provider's data set, this is not always achievable in reality because it is non-trivial to gather a test data set is truly representative of inputs that future predictions will be based on. Furthermore, the central agency may not have access to such data. Instead, it may be more favourable to rely on certain statistical measures to evaluate the level of uncertainty associated with the learnt predictive models.
\\ \\
We introduce two different characteristic functions, \textit{Fisher Information} and \textit{Mutual Information}, which are both deeply rooted in probability. First, we give a brief introduction of these functions and discuss their appropriateness in valuing predictive models.
\subsubsection{Fisher Information}
\textit{Fisher Information} \cite{12} measures the amount of information that an observed data carries about the parameter $\theta$ used to model the data. In particular, the Fisher Information of a set of data points gives us an idea about the variance of an unbiased parameter estimate generated from these data points, directly allowing us evaluate the uncertainty surround the predictive model (which comprises of these parameters). Traditionally, statisticians have used Fisher Information to design experiments to ensure estimated parameters lie within a small confidence interval.
\begin{definition}
Let $f(X; \theta)$ be the probability density function for random variable(s) $X$ conditional on a true parameter $\theta$. Then provided $f(X;\theta)$ is twice differentiable and under some regularity conditions, the \textbf{Fisher Information} is defined as $ \mathcal{I}(\theta;X)=-\operatorname {E} \left[\left.{\frac {\partial ^{2}}{\partial \theta ^{2}}}\log f(X;\theta )\right|\theta \right]$
\end{definition}
The more interesting result follows from the derivation of the \textit{Cramer Rao bound} after $\mathcal{I}(\theta)$ is derived, which lower bounds the precision that we can estimate parameter $\theta$.
\begin{definition}
Let $\hat{\theta}$ be any unbiased estimator of $\theta$ derived from observed data $X$. Then \textbf{Cramer Rao bound} states that $Var(\hat{\theta}) \geq \frac{1}{\mathcal{I}(\hat{\theta};X)}$, where the equality holds when the estimated parameter is efficient.
\end{definition}
This bound implies that when estimating parameters, maximising the \textit{Fisher Information} is equivalent minimising the smallest attainable variance about an estimated parameter. In fact, in many estimators, the equality holds.
\\
In the context of a multi-party machine learning process, we can derive the \textit{Fisher Information} of a set of data to derive the smallest attainable variance of the estimated parameter; a higher Fisher Information for a group of players' data implies possibly less uncertainty. This allows us to define, for any coalition $C$ of players with combined dataset $D_C$, $v_1(C) =  \mathcal{I}(\theta;D_C)$. 

In Linear Regression of the form $y = x^T\theta + \epsilon$, where $\epsilon \sim \mathcal{N}(0, \sigma^2 I)$, for a dataset $D_C$ containing $n$ observed input and output values $(X_i;Y_i)$ with noise $\sigma_{i}$:
\begin{equation}
    v_1(C) = \mathcal{I}(\theta;D_C) = \sum_{i=1}^n \frac{E(X_{i}^TX_{i})}{\sigma_{i}}
\end{equation}
We notice that $v_1(D_1) + v_1(D_2) = v_1(D_1 \cup D_2)$ for any two datasets $D_1$ and $D_2$. This implies that $v_1(.)$ is an additive and monotonic characteristic function. In fact, the variance of the estimated parameter is equals the reciprocal of $v_1(.)$ in Linear Regression.

\subsubsection{Mutual Information}
\textit{Mutual Information} measures the reduction in uncertainty regarding model parameters when one has access to a set of data $D$. Even though it shares a similar idea with \textit{Fisher Information} in that both gauge the value of data based on uncertainty of parameters, \textit{Mutual Information} accounts for the prior uncertainty regarding the parameters. One may prefer to use \textit{Mutual Information} if one has some idea what the "inherent uncertainty" associated with the model parameters is.
\begin{definition}
The \textbf{Mutual Information} between two continuous variables $(A,B)$ is defined as $M.I(A,B)=
{\displaystyle\int _{\mathcal {Y}}\int _{\mathcal {X}}{p_{(X,Y)}(x,y)\log {\left({\frac {p_{(X,Y)}(x,y)}{p_{X}(x)\,p_{Y}(y)}}\right)}}\;dx\,dy,}$
\end{definition}

In a Bayesian Linear Regression setting, let a dataset $D_C$ contain $n$ output values $(X_i,Y_i)$ (represented by $X \in \mathcal{R}^{n\times p}$, where $p$ is the number of parameters) with noise $\delta_i$ associated with each observation $i$. Assume $\delta \in \mathcal{R}^{n \times n}$ is a diagonal matrix with diagonals = $\delta_i$ and $\Sigma_{prior}$ is the prior covariance of parameters. Then we have:
\begin{equation}
    v_2(C) = M.I(\theta,D_C) = 0.5 \text{log}|\Sigma_{prior}||\Sigma_{prior}^{-1}+X^T \sigma^{-1}X|
\end{equation}
We notice that $v_2(.)$, unlike $v_1(.)$, is not additive but still monotonic. 
\subsection{Experiment results}
We first synthesise one-dimensional data held by seven different players, shown in Figure 5. The legend also shows the distribution of inputs $X$ of the data held by each player, along with the output noise $\delta$ associated with each player. Notice player 1 holds a single datapoint at the origin and player 3 and 4 hold datapoints at similar input spaces.

\begin{figure}[h]
\includegraphics[scale=0.7]{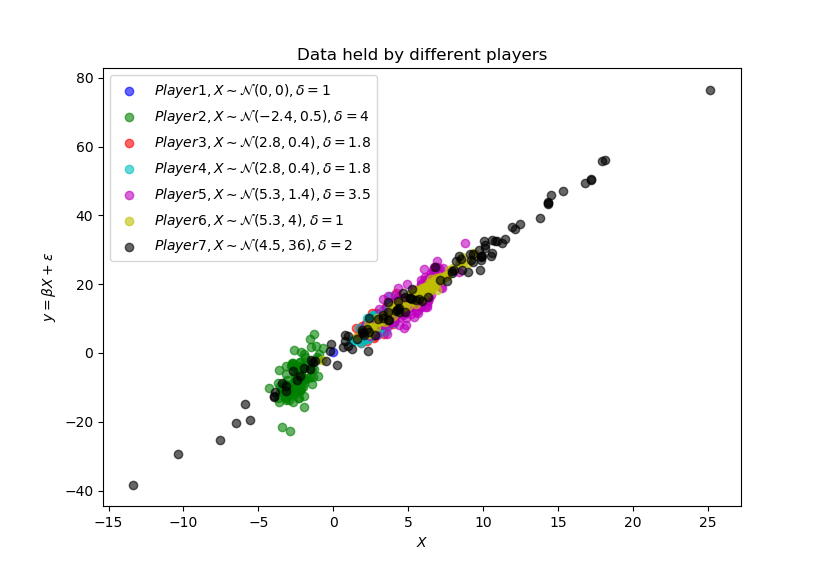}
\centering
\caption{Data held by different players}
\end{figure}

\subsubsection{Shapley value}
We then derive the \textit{shapley value} $\phi_i$, our contribution metric, for each player $i$, based on $v_1(.)$ and $v_2(.)$ in Table 1. Because $v_1(.)$ and $v_2(.)$ hold different interpretations towards the value of data, we observe that even though both valuations preserve the players' order of contribution, $v_1$ gives much larger disparity between players' contribution as compared to $v_2$. This can be inferred directly from the functions' definitions. As observed from Equation (15), $v_1(.)$ regards the value of 3 datapoints (sampled from similar locations and with equal noise level) to have 3 times the value of 1 datapoint; this leads to large ratios between the players' contribution. On the other hand, $v_2(.)$'s formulation in Equation (16) gives diminishing returns when more data is included; this leads to smaller ratio between the players' contribution.
\begin{center}
 \begin{tabular}{||c c c c c c c c||} 
 \hline
   & $\phi_1$ & $\phi_2$ & $\phi_3$ & $\phi_4$ & $\phi_5$ & $\phi_6$ & $\phi_7$   \\ [0.5ex] 
 \hline
 $v_1$ & 0 & 162 & 500 & 501 & 1007 & 3003 & 3868  \\
 \hline
 $v_2$ & 0.002 & 1.404 & 1.613 & 1.617 & 1.884 & 2.314 & 2.327  \\ 
 \hline
\end{tabular}
  \\[1.5mm] 
  Table 1: \textit{Shapley value} based on two different valuation function $v_1$ and $v_2$
\end{center}
Despite the differences in interpretation and contribution values obtained from both characteristic functions, we notice some similarities when they derive the \textit{shapley value} $\phi$: \begin{enumerate}
    \item Player 1 is a \textbf{null player} with a single data point $(0,0)$, which is generally useless in estimating the model parameter. Hence, both characteristic functions calculate $\phi_1$ as close to zero.
    \item Player 3 and Player 4 hold almost identical data (sampled from the same distribution) and thus $\phi_3$ and $\phi_4$ are almost identical for both characteristic functions.
    \item The order of player's contribution is the same for both characteristic functions.
\end{enumerate}
As such, despite differences in $v_1(.)$ and $v_2(.)$, they both offer some intuitive and desirable properties when deriving the contribution measure (these properties generally relate to how appropriately fair we are evaluating players' contribution). When generating an appropriate level of rewards to the players based on this contribution measure later, we will see that such desirable properties will be passed onto the outcomes.

\subsubsection{Characteristic functions affect optimality of outcomes}
\textbf{Definition 4} and \textbf{Proposition 5} defines the space of proportional and stable solution respectively. Using these definitions, we show the lower bound for stability for each player (recall that stability is defined by lower bounds on the reward given to each player in \textbf{Proposition 5}) and the proportional solution for $v_1(.)$ and $v_2(.)$ respectively in Figure 6. Moreover, \textbf{Theorem 6} directly allows us to check if an optimal outcome is possible based on whether the proportional solution is stable; hence, Figure 6 also showcases whether the outcome is optimal for both characteristic functions. 

The choice of an appropriate characteristic function \textbf{should not} be motivated by whether said function is able to achieve an optimal outcome. The outcome, whether optimal or not, is simply the result from a function choice. The choice of a particular function to value a model should still depend on its theoretical or practical appropriateness.
\begin{figure}[h]
\includegraphics[scale=0.7]{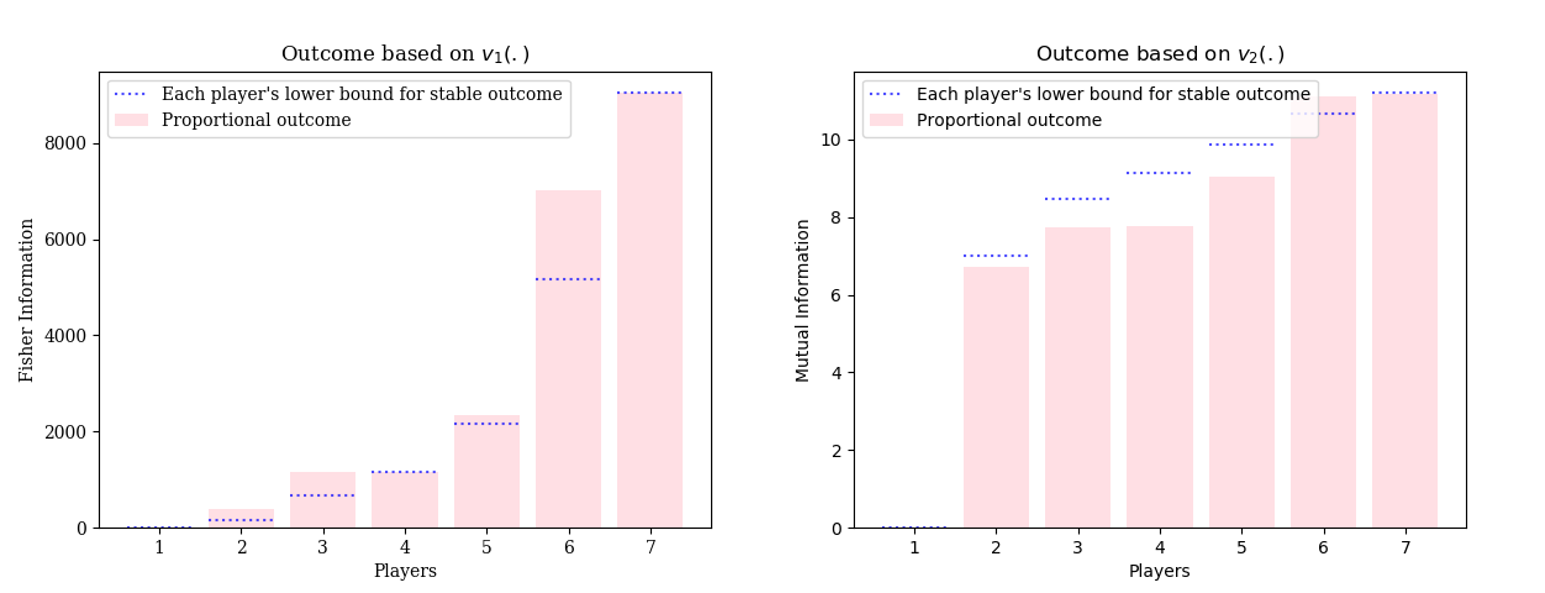}
\centering
\caption{Stablility bound and Proportional solution for $v_1$ and $v_2$}
\end{figure}

We immediately notice the following differences between the solutions induced by two different characteristic functions: \begin{enumerate}
    \item For $v_1(.)$, we see that for each player, the proportional reward given (red bars) is larger or equals to his own lower stability bound (blue dotted lines). Hence by \textbf{Theorem 6}, the payoff vector indicated by the red bars to the players is an optimal outcome; notice that player 3 and 4 enjoy equal amount of reward because they contributed data of similar value and player 1 receives no reward.
    \item For $v_2(.)$, we notice that the proportional solution payoff for Player 2,3,4 and 5 is lower than the stability lower bound defined by \textbf{Proposition 5}. Hence, the proportional solution is not stable and an optimal outcome is not achievable.
\end{enumerate}
As such, if the data providers and central agency agree to use $v_1(.)$ as the characteristic function used to evaluate the value of a model, then an optimal outcome can be achieved in the data sharing process. However, only suboptimal outcomes can be achieved with $v_2(.)$
\\ To intuitively explain why $v_2(.)$ does not allow an optimal outcome, notice that the inequality in \textbf{Theorem 6} (which checks if an optimal outcome is possible): $\phi_i\frac{v(N)}{\phi_n} \geq v(\bigcup_{j\leq i} j)$ most likely holds if the contribution of player $i$, $\phi_i$, is not much less than the combined value of the model created by players with lower or equal contribution as player $i$. In other words, if a player's data is generally useful (dictated by how much the player's data marginally increases the value of different coalitions of players) to every coalition, then his relative contribution should not be too small. In the larger picture, we need this inequality to hold for every player, suggesting that an optimal outcome is possible only when the players' data is generally equally useful to all different coalitions. Notice that $v_2(.)$, \textit{Mutual Information}, in Bayesian Linear Regression is a metric that values addition of data less when there is already a lot of data present; thus, the marginal addition of player's data to larger groups of players is generally not very useful, causing the inequality to be violated and the optimal outcome to be not achievable. On the other hand, \textit{Fisher Information} values datapoints additively without diminishing contribution, making players' data generally useful for every coalition and allowing the optimal outcome to be achievable.

\subsubsection{Settling for a suboptimal outcome}
To further demonstrate the analysis on suboptimal outcomes in Section 7, we focus on the case where the central agency, upon finding that an optimal solution cannot achieved, attempts to seek a suboptimal outcome which is still stable. This coincides with the outcome raised in Section 7.1, where players still receives a "somewhat proportional" reward (defined by a deviation measure). Since $v_2(.)$ cannot achieve an optimal outcome, we demonstrate the suboptimal outcome based on $Deviation_{sum}$ minimisation (Equation 8a) in Figure 7.

\begin{figure}[h]
\includegraphics[scale=0.5]{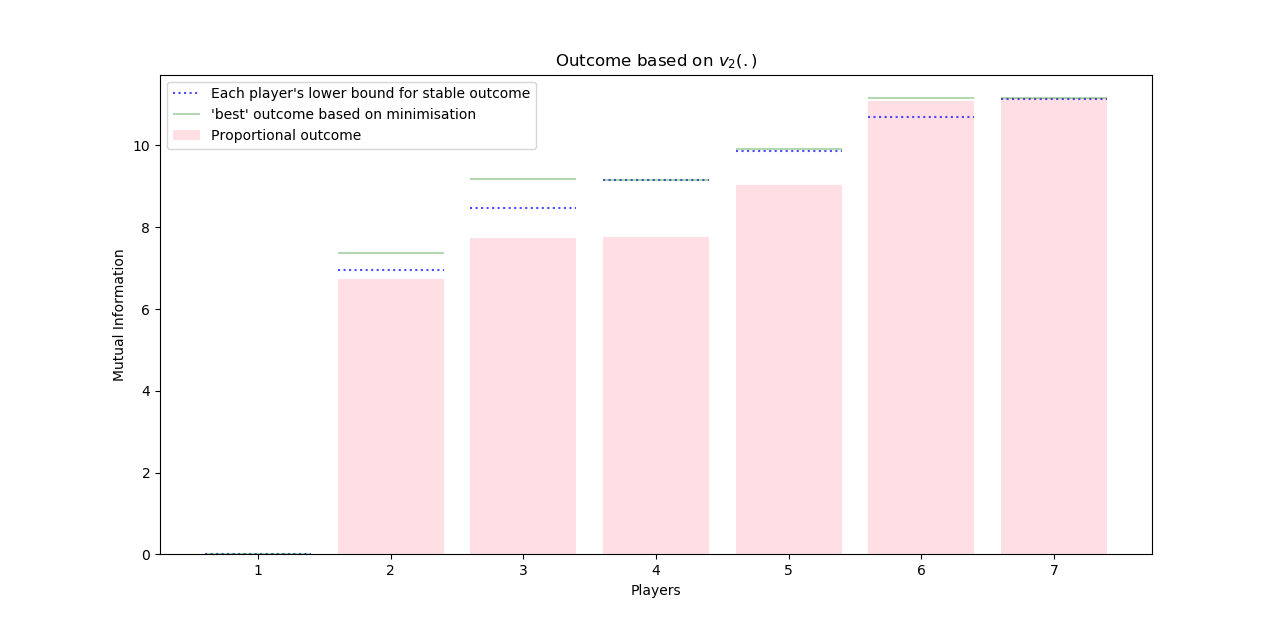}
\centering
\caption{Suboptimal outcome (green line) based on $Deviation_{sum}$ minimisation}
\end{figure}

From Figure 7, one notices that the suboptimal outcome (green line) is stable and preserves certain level of proportionality between players. For example, player 1, a null player, continues to receive zero reward, whereas player 3 and 4 still receive the same level of reward in the suboptimal outcome. However, the rewards given to other players are not proportional to their relative contribution anymore.

It is easy to see that a suboptimal outcome which compromises proportionality in return for stability will always yield higher reward for players. This is because the proportional reward allocation is not stable and thus we have to increase the reward given to achieve stability.
\subsubsection{Perturbation of contribution of one particular player}
We would also like to demonstrate how \textbf{Theorem 9} allows us to check if an optimal outcome is still achievable given that the contribution of one player (with index $i$) changes marginally. Recall that this implies the following: \begin{equation}
    \begin{split}
       &  v'(C) = v(C)+\delta \text{ for all coalition } C \text{ containing agent } i \\
       & \phi^{new}_i = \phi_i + \delta 
    \end{split}
\end{equation}
For the rest of the section, we assume that $\delta > 0$, implying that the contribution of a data provider has marginally increased (due to addition of better data or denoising). From \textbf{Theorem 9}, we only have to check if the following holds true for all players with higher contribution ($\phi$) than player $i$:
\begin{equation}
    \delta \leq \frac{d_s \phi_{max}}{\phi_{max}-\phi_s}\hspace{2mm} \forall s \in \hspace{1mm} \{s: \phi_s > \phi_i\}
\end{equation}
In the given example, assume that we would like to marginally change player 4's contribution according to expression (17) based on characteristic function $v_1(.)$. Then, we simply check if the following three inequalities (for player 5,6 and 7)  hold: \begin{equation}
    \begin{split}
        & \delta \leq \frac{d_5\phi_{max}}{\phi_{max}-\phi_5} \\
        & \delta \leq \frac{d_6\phi_{max}}{\phi_{max}-\phi_6} \\
        & \delta \leq \frac{d_7\phi_{max}}{\phi_{max}-\phi_7}
    \end{split}
\end{equation}
Omitting the arithmetic derivation, the three inequalities can be summarised to the inequality condition: $0 < \delta \leq 249.4$ for the given synthetic dataset we created and $v_1(.)$. Hence, the optimal outcome is still achievable if and only if the marginal contribution of player 4 is increased not more than 249.9. We set $\delta$ to 200 and 400 and display the outcomes in Figure 8 below.

\begin{figure}[h]
\includegraphics[scale=0.5]{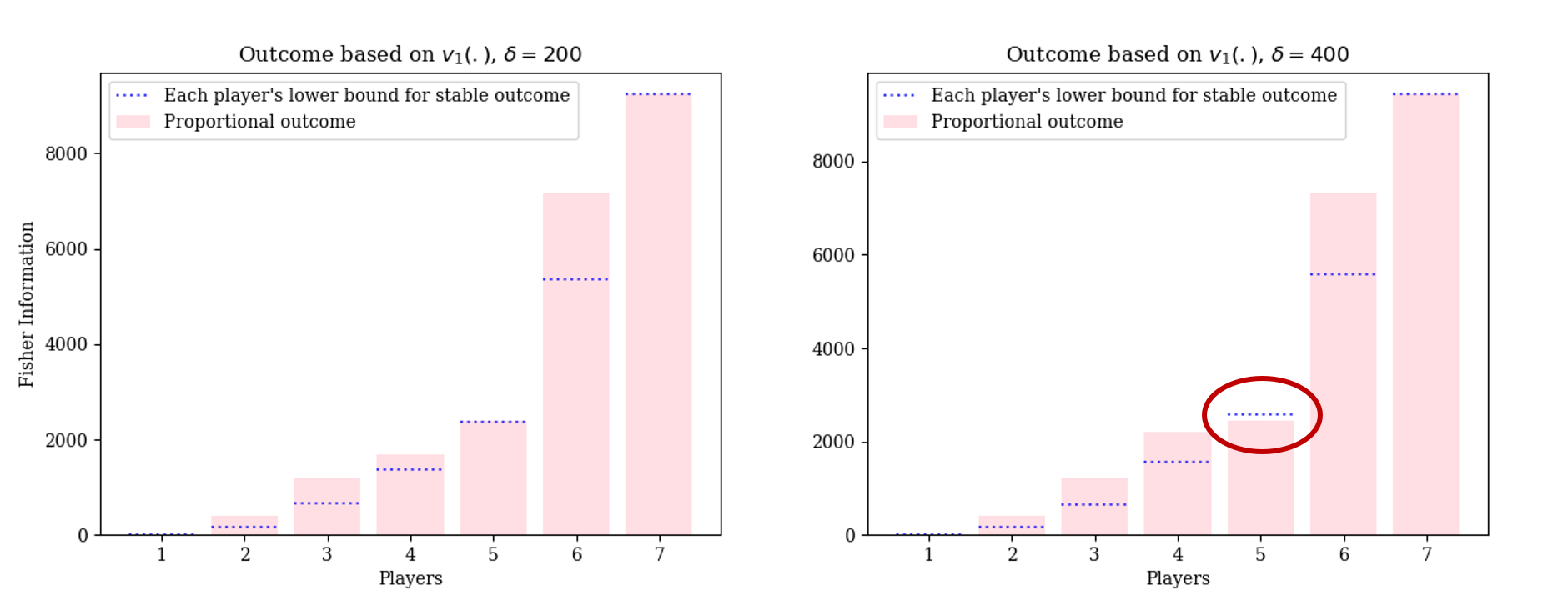}
\centering
\caption{Outcomes when player 4's contribution is increased by $\delta = 200 $ and $\delta =  400$}
\end{figure}

As expected, since $\delta=200$ satisfies the inequality condition, the outcome is still optimal on the left, whereas when $\delta = 400$ violates the inequality condition, the outcome is no longer optimal (the \textcolor{red}{red circle} highlights that player 5's proportional payoff is not stable, and hence an optimal outcome is not achievable) on the right.

\subsection{Extensions on other machine learning models}
\subsubsection{Logistic Regression for classification problems}
Similar to Linear Regression, the \textit{Fisher Information} for a Logistic Regression model is well defined and indicative of the variance of the estimated parameters as well. The derivation \cite{13} is well known and we summarise the results below.

Let $\theta = (\theta_1,\dots,\theta_p)^T\in \mathcal{R}^p$ be the model parameters. For the dataset $D=\{(x_t,y_t)\}$ with $n$ data points, the likelihood of a Logistic Regressionmodel is as follows:
\begin{equation}
    f(y|x,\theta) = P(y+1|x,\theta)) = \frac{1}{1+e^{-{\theta^Tx}}}
\end{equation}
Then the \textit{Fisher Information} (in matrix form since we have more than one parameter now) has entries:
\begin{equation}
    \mathcal{I}(\theta) = X^TWX
\end{equation}
where we have $X$ as the data set in matrix form and defined the $n \times n$ diagonal matrix:
\begin{equation}
    W = \text{diag}\left(\frac{e^{\sum^p_{j=0}\theta_j x_{1j}}}{(1+e^{\sum^p_{j=0}\theta_j x_{1j}})^2},\dots,\frac{e^{\sum^p_{j=0}\theta_j x_{nj}}}{(1+e^{\sum^p_{j=0}\theta_j x_{nj}})^2}\right)
\end{equation}
We can use the sum of diagonals, $(X^TWX)_{jj}$, of the \textit{Fisher Information} matrix as the characteristic function associated to a set of data, since large diagonal entries correspond to smaller uncertainty about one of the parameters using the Cramer Rao Bound (\textbf{Definition 11})
\subsubsection{Neural Networks}
For neural networks with softmax or sigmoid output layers (whose output can be viewed as a likelihood), the \textit{Fisher Information} and \textit{Mutual Information} are also well defined for each parameter within the networks. However, due to large number of parameters in modern neural networks, it may be impractical to perform large matrix arithmetic for these measures.

Agencies can resolve this by adopting a pre-trained neural network and only train the parameters on the last few layers, greatly reducing the number of parameters in question (of course, this requires the pretrained model to be trained on a similar task).
\subsection{Extension on real life data}
We extend the data sharing process to a real life data set consisting of breast cancer tumour diagnosis \cite{16}. Such data can be held in small sets separately by a few hospitals, and our data sharing process investigates their contribution and rewards when they work together. The data is labeled with a binary value indicating whether a tumour is malignant based on various attributes of its appearance such as perimeter and smoothness. We first train a neural network using a small subset of data to simulate a pre-trained neural network with two hidden layers of 6 hidden units each. Our data sharing process learns 4 parameters (3 parameters for each unit in the exposed layer and 1 offset parameter) associated with the last layer in a Logistic Regression setting. Figure 9 highlights the neural network structure.

\begin{figure}[h]
\includegraphics[scale=0.9]{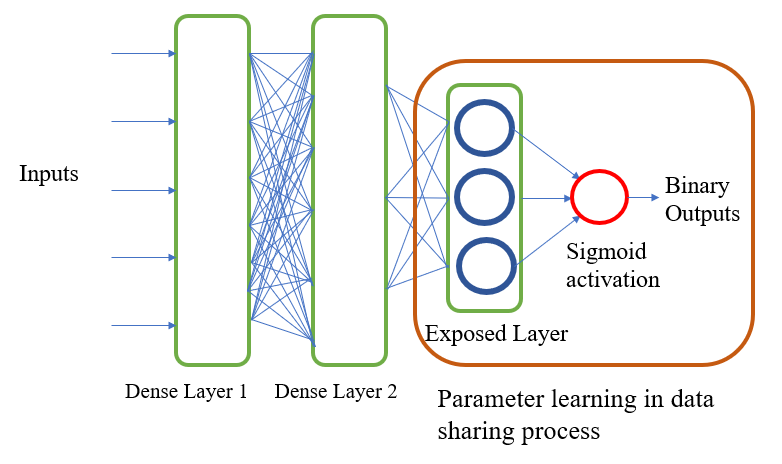}
\centering
\caption{Pre-trained neural network with the data sharing process done over the last layer}
\end{figure}
Hence, although the breast cancer data set has 10 features, the neural network transforms it into a 3 dimensional data set in the first 2 layers. This transformed data is then trained over a Logistic Regression model in the final layer. To further demonstrate that some data points in the 3 dimensional space is more useful than others, we use the trace of the \textit{Fisher Information} matrix as our characteristic function in a Logistic Regression setting with $\mathcal{I}(\theta) = X^TWX$ defined in Equation (21).

The expression tells us that $\mathcal{I}(\theta)$ has large diagonal entries when $W$, a diagonal matrix, has large diagonal entries. Equation (22) further implies that each diagonal entry corresponds to $\hat{p}(1-\hat{p})$ of a contributed data point, where $\hat{p}$ is the output of the Logistic Regression model for that data point. Thus, the diagonal entry is large when the corresponding data point contributed lies close to the decision boundary of the learnt Logistic Regression model.

As such, if the \textit{Fisher Information} is used as the characteristic function used to gauge the value of data, data points which lie close to the learnt decision boundary will have higher value (Figure 10). To demonstrate this, we assume 3 hospitals each with the same number of data points. They hold data points in 3 different regions: 
\begin{itemize}
    \item \textbf{Hospital A} holds only data points which lies near the boundary ($ 0.25 < \hat{p} < 0.75 $)
    \item \textbf{Hospital B} holds data points which can be moderately far from the boundary ($ 0.1 < \hat{p} < 0.9 $)
    \item \textbf{Hospital C} holds data points which can be very far from the boundary ($ 0.05 < \hat{p} < 0.95 $)
\end{itemize}
\begin{figure}[h]
\includegraphics[scale=0.8]{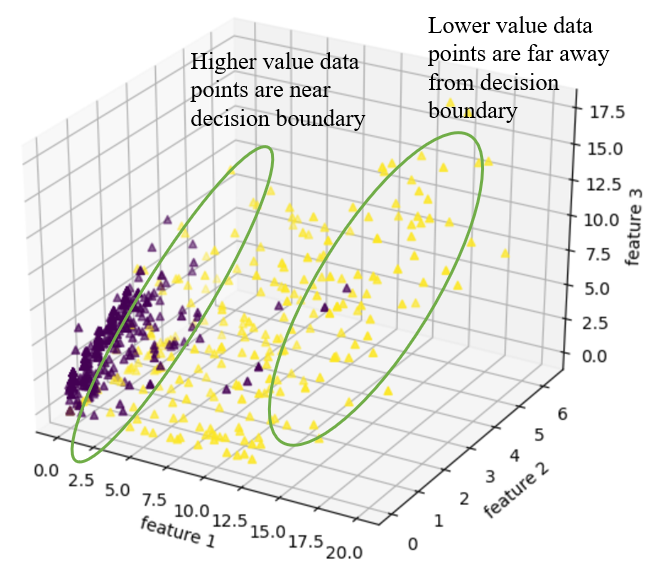}
\centering
\caption{Varying importance (based on \textit{Fisher Information}) of real life data after neural network transformation}
\end{figure}

\begin{figure}[h]
\includegraphics[scale=0.6]{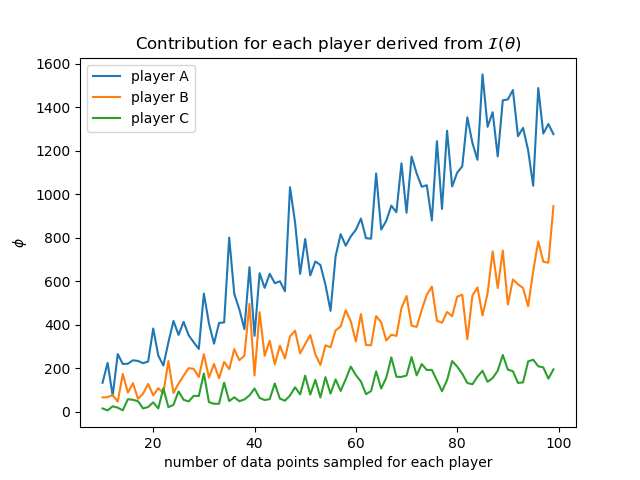}
\centering
\caption{\textit{Shapley value} of each hospital based on \textit{Fisher Information}}
\end{figure}
We see that randomly sampling data points for each player at different regions of the feature space does indeed give different level of contribution in the data sharing process. To calculate whether the outcome is optimal in terms of the model awarded to each hospital, one applies \textbf{Theorem 6} based on the given \textit{Shapley value} of each hospital. In this case, we assume the data set is divided amongst the three hospital (around 200 data points each) such that each holds equal number of data points but of different quality. It turns out, fortunately, the outcome is optimal (proportional and stable):

\begin{center}
 \begin{tabular}{||c c c c||} 
 \hline
    & \textbf{Hospital A} & \textbf{Hospital B} & \textbf{Hospital C}   \\ [0.5ex] 
 \hline
 $\phi$ (contribution) & 317 & 1369 & 2801  \\
 \hline
 Reward & 508 & 2194 & 4488 \\ 
 \hline
\end{tabular}
  \\[1.5mm] 
  Table 2: \textit{Shapley value} based on two different valuation function $v_1$ and $v_2$
\end{center}
We observe that each hospital receives a better model than if it had worked alone. This implies the data sharing process creates higher quality model for each participating hospital and is desirable for society at large.
\section{Future Work}
\subsection{Choosing reasonable characteristic functions}
Choosing a sound characteristic function to measure players' contribution in a data-sharing process is crucial in calculating contribution and generating fair reward for each player. In corporate practice, players and agencies may find it inconvenient to use complicated measures (such as \textit{Fisher Information} and \textit{Mutual Information}) due to the lack of understanding of their theoretical interpretations. This may require some domain experts to be involved in such data sharing projects.

From a theoretical stand, it would be desirable to categorise characteristic functions into classes based on certain function properties (e.g monotonicity,  submodularity,  superadditivity). We can then study what outcomes are possible for different classes of functions. This is useful because when faced with a new characteristic function, we can try to see if such a function falls into any of these function classes and immediately reach some known conclusions about the outcome.

\subsection{Desirable suboptimal outcomes}
This paper offered a few suboptimal outcomes when optimality cannot be achieved in Section 7. However, it is not clear which outcomes are actually desired by participating data providers in real life. One may need to rely on some empirical studies on the behaviour of players in terms of preference and rationality.
\section{Conclusion}
This paper aimed to provide investigative insights towards the data sharing process with regards to players' contribution evaluation and outcomes, which have not been studied much previously.

In this paper, we formulated the data sharing process as a new class of cooperative game with new definition of stability and fairness. Following which, we provided theoretical analyses on the various optimal and suboptimal outcomes of such games in relation to any arbitrary characteristic function and explained how these outcomes may be useful to a central mediator in the data sharing process in terms of explaining the behaviours of participating players. Finally, we performed sensitivity analysis to study how sensitive an optimal outcome is towards marginal changes.

Through experiments on synthetic real life datasets, we demonstrated the theoretical results introduced in the paper, offered intuitive meaning behind them and explained how these results can be extended towards different machine learning models.
\newpage

\addcontentsline{toc}{section}{References}
\nocite{*}
\bibliographystyle{apalike}
\bibliography{ref}

All code in the experiments can be found at: \\ \url{https://github.com/chenzhiliang94/multi-agent-data-sharing}
\end{document}